\documentclass[11pt]{article}
\PassOptionsToPackage{hyphens}{url}
\IfFileExists{acl.sty}{%
  \usepackage{acl}
}{%
  \usepackage[margin=1in]{geometry}
  \usepackage[round,authoryear]{natbib}
}
\usepackage[T1]{fontenc}
\usepackage[utf8]{inputenc}
\usepackage{times}
\usepackage{latexsym}
\usepackage{microtype}
\usepackage{amsmath,amssymb,amsthm,mathtools}
\usepackage{booktabs}
\usepackage{multirow}
\usepackage{graphicx}
\usepackage{xcolor}
\usepackage{tikz}
\usetikzlibrary{arrows.meta,positioning,fit,shapes.geometric}
\usepackage{url}
\usepackage{hyperref}
\hypersetup{hidelinks}
\providecommand{\pdfinfo}[1]{}
\title{Stateful CARS: Exact Cross-History Reuse for\\Policy-Constrained LLM Agents}
\author{
\textbf{Ibne Farabi Shihab}\textsuperscript{1}\thanks{Corresponding author: \texttt{ishihab@iastate.edu}}
\quad
\textbf{Md Najmus Swaqeeb}\textsuperscript{2}
\quad
\textbf{Abu Sa-Adat Mohamed Moon-Im Al Ahsan}\textsuperscript{2}
\\[4pt]
\textsuperscript{1}Department of Computer Science, Iowa State University
\\
\textsuperscript{2}Department of Computer Science \& Engineering, BRAC University
\\
\texttt{ishihab@iastate.edu},
\texttt{md.najmus.swaqeeb@g.bracu.ac.bd},\\
\texttt{abu.sa.adat.mohamed.moon.im.al.ahsan@g.bracu.ac.bd}
}

\date{}

\newtheorem{theorem}{Theorem}
\newtheorem{lemma}[theorem]{Lemma}
\newtheorem{proposition}[theorem]{Proposition}
\newtheorem{corollary}[theorem]{Corollary}
\newtheorem{definition}[theorem]{Definition}
\newtheorem{assumption}[theorem]{Assumption}

\newcommand{\actions}{\mathcal{A}}
\newcommand{\histories}{\mathcal{H}}
\newcommand{\trajectories}{\Omega}
\newcommand{\valid}{\mathcal{C}}
\newcommand{\policy}{\pi_\theta}
\newcommand{\traj}{\tau}
\newcommand{\history}{h}
\newcommand{\schemas}{\mathcal{B}}
\newcommand{\grounded}{\mathcal{W}}
\newcommand{\exclude}{\mathcal{E}}
\newcommand{\abs}{\phi}
\newcommand{\zspace}{\mathcal{Z}}
\newcommand{\residual}{r}
\newcommand{\prob}{P_\theta}
\newcommand{\cyl}{\operatorname{Cyl}}

\newcommand{\trace}{\alpha}
\newcommand{\concat}{\mathbin{\Vert}}
\newcommand{\TV}{\operatorname{TV}}
\newif\ifshowplaceholders
\showplaceholderstrue
\newcommand{\placeholder}[1]{\textcolor{red!70!black}{\textsf{[UNRUN: #1]}}}

\begin{document}
\maketitle

\begin{abstract}
Tool-using language-model agents face constraints whose meaning changes with observations and prior actions. We study exact sampling from the model distribution conditioned on a hard stateful validator while reusing invalidity certificates across histories. Stateful CARS freezes a bank of sound state--continuation schemas within each attempt and removes every trajectory containing a certified continuation at a matching abstract state. An exact residual Doob transform samples from the resulting proposal. We give a checkable future-validity bisimulation condition, prove schema soundness, adaptive exactness, i.i.d.\ outputs, almost-sure termination, monotone acceptance, and compression invariance, and characterize computation by the number of reachable full-history product states. This number can be exponential for a history-dependent language model; the evaluated method therefore makes no generic finite-trie scalability claim. On enumerable workflows, its analytic law matches the valid conditional to $10^{-16}$ at validity probability $6\times10^{-8}$, whereas state-aware local decoding can be $0.97$ away. A matched comparison is negative: observation-keyed official CARS is cheaper in sampler steps (root/Stateful ratio $0.942$ $[0.934,0.951]$), and the Qwen comparison is null ($0.99$ $[0.90,1.08]$). Cross-history transfer helps only in an internal matched-key ablation ($1.27\times$). Thus the evidence supports exact schema-induced conditioning, not a systems advantage over CARS.
\end{abstract}

\section{Introduction}

Language-model agents search databases, call APIs, update records, and run multi-step workflows, and in these settings validity is a property of an evolving trajectory rather than of an isolated string: a refund may be permitted only after authentication and an ownership check, an identifier only after a tool returned it. A locally well-formed action can therefore be globally impossible.

Let a language-model policy induce a distribution $\prob$ over complete trajectories and let $\valid$ be the set accepted by a hard validator. The faithful target is $\prob(\cdot\mid\valid)$. Rejection sampling reaches it exactly at expected cost inversely proportional to $\prob(\valid)$; local constrained decoding is cheap but stepwise masking generally yields a different distribution, because it ignores how much valid continuation mass remains after each action. The distinction matters whenever samples support self-consistency, uncertainty estimates, or any decision depending on relative probability.

Constrained Adaptive Rejection Sampling (CARS) records invalid root prefixes and subtracts their probability mass from later proposals while retaining exactness \citep{parys2025cars}. A root prefix may contain the complete serialized action--observation history, so ordinary CARS can already use a stateful validator. Its limitation is reuse: two distinct root histories that reach the same authentication or policy state occupy different trie branches. A certificate proved at one branch must be rediscovered or separately instantiated at the other. Thus, the novelty needed for stateful agents is not the ability to represent state through history; it is sound generalization across histories without changing the conditional target.

A direct state-indexed construction says that whenever abstract state $z$ is reached, a certified continuation $u$ should be excluded. Avoiding every present and future occurrence of $(z,u)$ requires the survival probability of the entire history-dependent model under these contextual exclusions. This is a Doob $h$-transform and is closely related to the future-validity quantity that makes exact global constrained decoding difficult \citep{loula2025smc,nie2026futurevalidity}. A small schema bank does not by itself make this computation small.

Stateful CARS implements this state-indexed construction directly. A bank $\schemas$ contains reusable sound schemas $(z,u)$. At the start of an attempt the bank is frozen, thereby fixing the schema-induced exclusion event for the whole draw. The sampler computes its residual mass on the reachable completion tree, using the complete serialized history in every language-model query, and applies the corresponding Doob transform. New sound schemas are committed only after the attempt terminates. This is the implementation evaluated in the paper. It is exact on any finite tree on which the residuals can be computed, but in the worst case it traverses exponentially many histories. A lazier visited-grounding variant can instead materialize only finite root-prefix exclusions and use the ordinary CARS trie; it is also exact, but we have not implemented or evaluated it and make no empirical claim for it.

We make four contributions. First, we formalize schema-induced conditioning and its freeze-and-commit update, making the evaluated exclusion event explicit. Second, we give a locally checkable future-validity bisimulation condition under which state schemas are sound. Third, we prove exactness under adaptive reuse and compression---including an unbounded sequence of i.i.d.\ outputs from $\prob(\cdot\mid\valid)$ and almost-sure termination---and state the computational boundary: exact memoization costs linear time in the reachable full-history product graph, which may be exponential in horizon unless the policy admits additional Markov structure. Fourth, we evaluate the mechanism without claiming an advantage over the closest baseline. Exactness survives validity probabilities down to $6\times10^{-8}$, but matched observation-keyed official CARS is cheaper in sampler steps, and although the abstraction obligations are now machine-verified on an observation-conditioned tool grammar (Appendix~\ref{app:verified}), the certified agent's utility claim is a null at pilot scale.

\begin{figure}[htbp]
\centering
\begin{tikzpicture}[
  node distance=0.8cm and 1.2cm, 
  box/.style={
    draw,
    rounded corners,
    align=center,
    minimum height=0.65cm,
    font=\scriptsize,
    inner xsep=6pt,
    inner ysep=4pt
  },
  arr/.style={-Latex, thick}
]

\node[box] (lm) {LM policy\\$\pi_\theta$};
\node[box, right=of lm] (proposal) {exact residual\\proposal $Q_{\overline{\mathcal{B}}}$};
\node[box, right=of proposal] (val) {stateful\\validator};

\node[box, below=0.8cm of proposal] (bank) {frozen schema bank\\$\overline{\mathcal{B}}$};
\node[box, below=0.8cm of val] (pending) {pending sound\\schemas $\Delta$};

\draw[arr] (lm) -- (proposal);

\draw[arr] (proposal) -- node[above=2pt, font=\tiny] {trajectory} (val);

\draw[arr] (bank) -- (proposal);
\draw[arr] (val) -- node[right=2pt, font=\tiny] {certify} (pending);

\draw[arr] (pending) -- node[below=3pt, font=\tiny, align=center] {commit\\[-1pt]after attempt} (bank);

\end{tikzpicture}
\caption{Evaluated Stateful CARS. A frozen schema bank induces one fixed exclusion event for the attempt. The exact residual proposal applies those schemas at every matching future state; newly certified schemas enter only after the attempt ends.}
\label{fig:stateful-cars-eval}
\end{figure}
\section{Related Work}

Grammar-, automaton-, and schema-constrained decoding reliably produces well-formed JSON or SQL. Incremental parsers reject inadmissible continuations during generation \citep{scholak2021picard,poesia2022synchromesh}, finite-state and grammar compilers make the admissible-token mask cheap enough for serving \citep{willard2023outlines,ugare2025syncode,dong2025xgrammar}, and query languages expose the constraint to the programmer \citep{beurer2023lmql}; JSONSchemaBench documents both their value and the difficulty of implementing real schemas faithfully \citep{geng2025jsonschema}. All of these mask locally inadmissible tokens and renormalize, which is exact only when the surviving actions carry equal future-validity mass. Otherwise the terminal distribution shifts, a gap that grammar-aligned decoding attacks by reweighting toward the true conditional \citep{park2024gad} and that recent analyses of future validity make explicit \citep{nie2026futurevalidity}. Our setting differs in what the constraint depends on: these systems condition on the token prefix under a fixed grammar, whereas a stateful validator's admissibility changes with tool observations, so the same continuation is legal or illegal depending on what the environment returned.

Rejection sampling is globally exact but can be prohibitively expensive. CARS stores invalid prefixes adaptively and reweights the model to avoid their finite union, staying exact with monotone acceptance \citep{parys2025cars}; adaptive weighted rejection sampling accelerates black-box constraint evaluation with unbiased normalizer estimates \citep{lipkin2025awrs}, and sequential Monte Carlo gives flexible approximate inference under syntactic and semantic constraints \citep{loula2025smc}, with twisted variants learning proposals that anticipate the constraint \citep{zhao2024tsmc,kim2025sdtsmc}. The difficulty those methods confront is the same one we do: the future-validity statistic is what a locally masked proposal lacks, and estimating rather than ignoring it is the active line of work \citep{nie2026future}. Stateful CARS remains in the exact adaptive-rejection family but changes the exclusion object: a sound schema applies at every matching state rather than at one concrete root prefix. This generalization requires an exact contextual residual and can be exponentially expensive for a history-dependent model. Our contribution is therefore the sound state abstraction, the exact adaptive schema-induced construction, and its complexity characterization, not a claim that schema reuse retains CARS's finite-trie cost.

Tool-use research studies action selection over large API collections \citep{qin2023toolllm,patil2023gorilla}, stable simulation \citep{guo2024stabletoolbench}, dynamic user--agent interaction \citep{yao2024taubench}, reasoning-and-acting \citep{yao2023react,schick2023toolformer}, and feedback-driven repair \citep{madaan2023selfrefine}, and it is evaluated on broad agentic suites spanning interactive environments, web tasks, and repository-level software engineering \citep{liu2024agentbench,zhou2024webarena,jimenez2024swebench}. Those benchmarks measure task success; none defines an exact conditional trajectory law, which is the object we sample from. Their action spaces are also far larger than the bounded grammar we can enumerate, so they are the natural target for the residual construction and simultaneously the regime where its cost boundary binds hardest.

Formal guardrails supply the kind of certificates required here. Agent-C compiles temporal policies and blocks non-compliant actions \citep{kamath2025agentc}; ToolGate tracks symbolic state and verifies tool preconditions and postconditions \citep{toolgate2026}, reusing the precondition/postcondition discipline that axiomatic program semantics introduced \citep{hoare1969axiomatic}, and such checks are typically discharged by an SMT solver \citep{demoura2008z3}. Those systems answer whether an action is allowed. Stateful CARS addresses a complementary question: how to reuse such answers across histories while preserving the language model's relative probability over all trajectories that the validator accepts. Our abstraction obligations are of exactly the form a solver could discharge, and Appendix~\ref{app:verified} discharges them by exhaustive enumeration on a bounded grammar rather than symbolically, which is sufficient there and would not scale to an unbounded one.

\section{Problem Formulation}
\label{sec:setup}

We use a finite action alphabet $\actions$, finite horizon $H$, and a deterministic environment: from $s_t,a_t$ it produces the next state and observation, both appended to the model history $\history_{t+1}$ (illegal actions move to a terminal failure state, so the transition is total). The policy may depend on the whole serialized history, $a_t\sim\policy(\cdot\mid\history_t)$, and together with the environment induces a measure $\prob$ over complete trajectories $\trajectories$; Appendix~\ref{app:stochastic} treats stochastic tool outcomes. A deterministic validator defines $\valid\subseteq\trajectories$ with $\prob(\valid)>0$, and the target is
\begin{equation}
  \prob(\traj\mid\valid)=\frac{\prob(\traj)\mathbf{1}\{\traj\in\valid\}}{\prob(\valid)}.
  \label{eq:target}
\end{equation}
Each reachable history $\history$ has a unique \emph{root trace} $\trace(\history)$: the complete serialized prefix---initial context, actions, and all observations so far---not the action sequence alone. This matters because when validity depends on an observation, two histories with identical actions but different observations admit different valid completions, so an action-only key would not identify the branch and Lemma~\ref{lem:grounding} would fail. The probability space is therefore over action--observation trajectories and $\prob$ in Equation~\eqref{eq:target} is the joint law. Our environment draws a hidden ownership world $c\in\{\textsc{own},\textsc{other}\}$ once per episode from a prior $\rho$ and reveals it by \texttt{PROBE}, so $\prob(\valid)=\sum_c\rho(c)\prob(\valid\mid c)$ and the target is the corresponding $\rho$-mixture over the enumerated $10{,}922$ outcomes; Appendix~\ref{app:stochastic} gives the kernel, the world-sampling protocol, and the two-satisfiable-world validation. Let $\cyl(w)$ be the complete trajectories whose root trace begins with $w$; $w$ is invalid when $\cyl(w)\cap\valid=\emptyset$. Write $u\in L(\history)$ when some suffix completes $u$ from $\history$ into $\valid$; this future-valid continuation language depends on policy state, observations, bindings, and remaining budget, but never on the probability the model assigns.

\begin{definition}[Validity-preserving abstraction]
An abstraction $\abs:\histories\to\zspace$ is validity-preserving when
\[
  \abs(\history)=\abs(\history')\quad\Longrightarrow\quad
  L(\history)=L(\history').
\]
The remaining action budget is included in $\abs$ whenever validity depends on it.
\end{definition}

The definition states the exact requirement but is not, by itself, a construction. The following local condition is checkable for finite symbolic validators. Let $F(z,a)$ be an abstract successor, let $b(z)$ identify terminal abstract states, and let $c(z)$ give terminal acceptance.

\begin{assumption}[Future-validity bisimulation]
For any reachable $\history,\history'$ with $\abs(\history)=\abs(\history')=z$: (i) both are terminal or both are nonterminal; (ii) if terminal, they have the same acceptance value $c(z)$; and (iii) if nonterminal, then for every $a\in\actions$,
\[
  \abs(\operatorname{Succ}(\history,a))
  =F(z,a)
  =\abs(\operatorname{Succ}(\history',a)).
\]
\end{assumption}

\begin{proposition}[Local condition implies validity preservation]
Under the finite horizon and future-validity bisimulation assumption, $\abs$ is validity-preserving.
\label{prop:bisim}
\end{proposition}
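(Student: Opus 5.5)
The argument is an induction on the remaining horizon, showing that for any two reachable histories with a common abstract value $z$, every finite action word $u$ that lies in $L(\history)$ also lies in $L(\history')$. By symmetry this gives $L(\history)=L(\history')$. The key step is a stronger statement, proved by induction on $|u|$ (equivalently, on remaining horizon, which is bounded by $H$).

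Claim: for reachable $\history,\history'$ with $\abs(\history)=\abs(\history')$, and any word $u$:
\begin{itemize}
\item $u$ is executable from $\history$ iff it is executable from $\history'$;
\item the resulting histories again share an abstract value;
\item the terminal status and acceptance of $\operatorname{Succ}(\history,u)$ and $\operatorname{Succ}(\history',u)$ coincide.
\end{itemize}

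\emph{Base case} $u$ empty. Condition (i) of the bisimulation assumption gives equal terminal status. If both histories are terminal, (ii) gives the same acceptance value $c(z)$.

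\emph{Inductive step} $u=a\,u'$. If $\history,\history'$ are terminal, no action extends either, so both sides agree trivially. Otherwise (iii) gives $\abs(\operatorname{Succ}(\history,a))=F(z,a)=\abs(\operatorname{Succ}(\history',a))$. Both successors are reachable, since the transition is total and deterministic, with illegal actions going to the failure state. The induction hypothesis therefore applies to $u'$ from these successors.

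Because the horizon $H$ is finite, every path terminates within $H$ steps, so the induction is well founded. There is also no coinductive subtlety about infinite continuations, and terminal states are reached along every maximal word.

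To conclude, recall that $u\in L(\history)$ means some suffix $v$ makes $\operatorname{Succ}(\history,uv)$ terminal and accepted. Apply the claim to the word $uv$: $\operatorname{Succ}(\history',uv)$ is then also terminal with the same acceptance value $c(\cdot)$, so $u\in L(\history')$.

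The main obstacle is bookkeeping rather than depth. Two points need care.
\begin{itemize}
\item \emph{Reachability.} The assumption only constrains reachable histories, so I would note explicitly that successors of reachable histories are reachable. Observations are deterministic here; the stochastic case is deferred to the appendix.
\item \emph{Budget dependence.} The remaining budget must be encoded so that terminality at horizon $H$ is determined by $z$. This is exactly what (i) guarantees, and it matches the definition's remark that the budget is included in $\abs$ when validity depends on it.
\end{itemize}
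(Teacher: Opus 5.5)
Your proof is correct, and it is organized differently from the paper's. The paper runs a backward induction on the remaining budget $d$, using the target statement itself, $L_d(\history)=L_d(\history')$, as the inductive invariant. At $d=0$ both histories are terminal with equal acceptance. At depth $d$, the successors under each first action $a$ share $F(z,a)$ and have $d-1$ actions left, so their languages agree, and hence so do those of $\history,\history'$.

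You instead prove a forward path-lifting lemma by induction on $|u|$: along every word the two runs stay synchronized in executability and abstract state, with matching terminal status and acceptance at the endpoint. You then read off language equality by applying the lemma to a witnessing word $u\concat v$. Your route buys three things:
\begin{itemize}
\item It never needs $d(\history)=d(\history')$, so it does not depend on the budget being an explicit coordinate of $\abs$; clauses (i) and (iii) already force the runs to terminate together.
\item It treats terminal histories with leftover budget (e.g.\ after \texttt{STOP}) uniformly. The paper's inductive step considers only nonterminal pairs and leaves this case implicit.
\item It yields the stronger intermediate fact that $\abs(\operatorname{Succ}(\history,w))$ depends only on $z$ and $w$, which is what abstract-level schema matching relies on.
\end{itemize}

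One small correction to your bookkeeping: induction on $|u|$ is well founded regardless of $H$. The finite horizon enters your argument only by guaranteeing that complete trajectories, and hence witnesses of $u\in L(\history)$, are finite words.

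The paper's version is shorter because its invariant is the conclusion. Its depth-indexed form also matches the partition refinement the paper suggests for checking the assumption.
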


The proof is a backward induction on remaining horizon (Appendix~\ref{app:bisimulation}). In practice, $z$ can be the product of the validator-automaton state, relevant database predicates, bound identifiers, permission state, and remaining budget. A full history is always conservative; the coarsest possible abstraction is the right congruence induced by equality of $L(\history)$.

\begin{definition}[Sound schema and grounding]
A state-continuation schema $(z,u)$ is sound if $u\notin L(\history)$ for every reachable $\history$ with $\abs(\history)=z$. Its grounding at such a history is
\[
  g(\history;z,u)=\trace(\history)\concat u.
\]
\end{definition}

For a bank $\schemas$ of schemas, define its \emph{schema-induced exclusion event}
\begin{equation}
\begin{aligned}
\exclude_{\schemas}
=\bigl\{\traj\in\trajectories:\;&
\exists\, t,\,(z,u)\in\schemas,\\
&\abs(\history_t(\traj))=z,\quad
a_{t:t+\lvert u\rvert-1}(\traj)=u
\bigr\}.
\end{aligned}
\label{eq:schema-event}
\end{equation}
Thus a schema applies at every matching history, including histories not previously visited by the sampler. Every occurrence of a sound schema is non-completable, so $\exclude_{\schemas}\cap\valid=\emptyset$. A concrete grounding remains useful for the separate visited-grounding variant, but it is not the exclusion object used by the evaluated implementation.

\section{Stateful CARS}
\label{sec:method}

The evaluated algorithm maintains a bank $\schemas$ of sound schemas. For a reachable full history $\history$, define the residual mass of the schema-induced survivor event by
\begin{equation}
  \residual_{\schemas}(\history)
  =\prob(\exclude_{\schemas}^{c}\mid\history).
  \label{eq:residual}
\end{equation}
When $\residual_{\schemas}(\history)>0$, the proposal is
\begin{equation}
 Q_{\schemas}(a\mid\history)
 =\frac{\policy(a\mid\history)
 \residual_{\schemas}(\operatorname{Succ}(\history,a))}
 {\residual_{\schemas}(\history)}.
 \label{eq:proposal}
\end{equation}
The denominator equals the sum of the numerators by the law of total probability.

At the beginning of attempt $k$, the algorithm freezes $\overline{\schemas}_k$. It then evaluates Equations~\eqref{eq:residual}--\eqref{eq:proposal} for that fixed bank and never writes proposal memory before the attempt terminates. The final validator accepts or rejects the trajectory. At the boundary, a certificate procedure may add a pending set $\Delta_k$ of schemas, but each entry must be proved non-completable for \emph{every} reachable history in its abstract class. The official update examines invalid siblings at every visited proper prefix after both valid and invalid attempts; a realized failure in one hidden world is insufficient unless the abstraction-level proof covers every world the abstraction leaves open. The next bank is $\schemas_{k+1}=\kappa(\schemas_k\cup\Delta_k)$, where $\kappa$ preserves the induced exclusion event. Algorithm~\ref{alg:stateful-cars} gives the complete procedure.

The complete serialized history remains the policy key. The abstraction determines where a certificate applies, but it never licenses replacing $\policy(a\mid\history)$ by $\policy(a\mid\abs(\history))$. In a deterministic environment, a schema matcher can track the active suffixes of patterns in $\schemas$, but the residual cache must still distinguish any histories on which the language model may assign different action probabilities.

\subsection{Computational boundary}

Let $\mathcal{R}_{\schemas}$ be the set of reachable product nodes explored by the exact residual recursion, where a node contains the complete serialized history and the deterministic schema-matcher state. Memoized backward recursion visits each such node once.

\begin{proposition}[Residual complexity]
For a fixed schema bank in a deterministic finite-horizon environment, exact schema-induced residuals require $O(|\actions|\,|\mathcal{R}_{\schemas}|)$ arithmetic operations, $O(|\mathcal{R}_{\schemas}|)$ cached values, and at most one policy-vector evaluation per distinct full history. In the worst case $|\mathcal{R}_{\schemas}|=\Theta(|\actions|^H)$. If the policy and environment admit a finite Markov state $x(\history)$ and the schema matcher has state space $M_{\schemas}$, the recursion can instead be memoized on $(t,x,m)$ in $O(H|X||M_{\schemas}||\actions|)$ time.
\label{prop:complexity}
\end{proposition}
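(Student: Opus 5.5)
The plan is to write the residual as the value function of a backward recursion on the product of the full history and a deterministic schema-matcher state, and then count nodes. First I would construct the matcher. For a fixed finite bank $\schemas$, the event that $\traj\in\exclude_{\schemas}$ is detected online by tracking the set of \emph{active partial matches}: pairs $((z,u),j)$ such that, at some earlier time $t$, $\abs(\history_t)=z$ and the last $j$ actions equal $u_{1:j}$. Since $\schemas$ is finite and $j<|u|$, this set lives in a finite space $M_{\schemas}$, together with an absorbing flag ``excluded''. Its update on action $a$ is deterministic, given $m$ and $\abs(\history)$:
\begin{itemize}
\item advance each active match whose next symbol is $a$, and drop the rest;
\item open new matches for every $(z,u)$ with $z=\abs(\history)$;
\item set the flag once some match completes.
\end{itemize}
Thus $m(\history)$ is a function of $\history$, and $\exclude_{\schemas}^c$ is the set of trajectories whose terminal matcher state is unflagged.

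Second, I would establish the recursion
\[
\residual_{\schemas}(\history,m)=\sum_{a\in\actions}\policy(a\mid\history)\,\residual_{\schemas}(\operatorname{Succ}(\history,a),\delta(m,\abs(\history),a)),
\]
with terminal value $1$ if unflagged and $0$ otherwise, and with value $0$ as soon as the flag is set. This follows from the law of total probability and determinism of the environment, exactly as noted after Equation~\eqref{eq:proposal}; a short induction on the remaining horizon gives $\residual_{\schemas}(\history)=\prob(\exclude_{\schemas}^c\mid\history)$.

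Memoized evaluation then touches each node of $\mathcal{R}_{\schemas}$ once. At each node it does the following:
\begin{itemize}
\item one policy-vector query, at most once per distinct full history, because $m$ is determined by $\history$;
\item $|\actions|$ multiply-adds;
\item $|\actions|$ successor and matcher lookups, which take $O(1)$ for a fixed bank.
\end{itemize}
This gives $O(|\actions|\,|\mathcal{R}_{\schemas}|)$ arithmetic and $O(|\mathcal{R}_{\schemas}|)$ cached values.

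For the worst case I would argue both bounds. The upper bound holds because there are at most $\sum_{t\le H}|\actions|^t=O(|\actions|^H)$ action sequences, and the deterministic environment makes the history a function of the actions. For the matching lower bound I would take an environment whose observations echo the actions, so that all $|\actions|^H$ histories are distinct, together with a history-dependent policy. In that case no two histories can be merged while keeping the policy key exact, as the paper requires, and the empty or non-excluding schemas leave the tree unpruned.

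Finally, for the Markov case I would assume three things: $\policy(\cdot\mid\history)$ depends only on $x(\history)$; $x$ and $\abs$ update deterministically through $(x,a)$; and $\abs(\history)$ is a function of $(t,x)$, or is otherwise folded into $x$. Under these assumptions the recursion above depends on $\history$ only through $(t,x(\history),m)$. The same induction then shows that $\residual_{\schemas}$ is a function $V(t,x,m)$. The table has $H|X||M_{\schemas}|$ entries, each costing $O(|\actions|)$, which gives the stated $O(H|X||M_{\schemas}||\actions|)$.

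The main obstacle is the matcher. Schemas fire at every time the abstract state matches, so matches overlap, and the matcher must also carry the abstract state used to open new matches. I would handle the matcher by an Aho--Corasick-style construction over the pairs $(z,u)$ and absorb the dependence on $\abs$ into the Markov hypothesis. I would state the $\Theta$ bound as a worst case over instances, not a bound for every instance.
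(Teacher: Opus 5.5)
Your proposal is correct and follows essentially the same route as the paper's proof. Like the paper, you compile the fixed bank into a deterministic matcher, run memoized backward recursion over (full history, matcher state) nodes with $|\actions|$ outgoing terms and one policy vector per history, use the unpruned full $|\actions|$-ary tree of distinct histories for the $\Theta(|\actions|^H)$ worst case, and collapse the cache key to $(t,x,m)$ when policy and environment factor through $x$. Your version is more explicit than the paper's in four places: it spells out the matcher update, verifies the recursion by induction on remaining horizon, supplies the matching upper bound by counting action sequences, and states the hypothesis that $\abs(\history)$ be a function of $(t,x)$, which the paper leaves implicit when it asserts that equal $(t,x,m)$ yields identical successors.
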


The upper bounds follow by counting product nodes and outgoing actions. The exponential lower bound is attained when every action prefix induces a distinct policy state and all branches survive until depth $H$. Appendix~\ref{app:complexity} gives the proof and the exact cache-key requirement. This boundary is central: the evaluated implementation is practical in our enumerable workflows, but a small abstraction or schema bank alone does not make it scalable to a free-form history-dependent agent.

\subsection{A conservative visited-grounding variant}

One can avoid applying a schema at unvisited future occurrences by maintaining a finite root-prefix set $\grounded$. When a matching history is actually visited, $g(\history;z,u)$ is placed in a pending buffer and committed after the attempt. The next proposal conditions only on
\[
\exclude_{\grounded}=\bigcup_{w\in\grounded}\cyl(w).
\]
This is ordinary finite-trie CARS over sound grounded prefixes. It can miss reusable exclusions until their histories have been visited, but the fixed-exclusion proof below still makes every returned sample exact. We include it to separate a possible scalable engineering route from the evaluated method. Appendix~\ref{app:vg} implements and measures it: its obligations hold and its steady-state cost matches both comparators on three domains.

\subsection{Lossless compression}

A schema-bank compression $\kappa$ is lossless when
\[
  \exclude_{\kappa(\schemas)}=\exclude_{\schemas}.
\]
Within one abstract state, $(z,u\concat v)$ is dominated by $(z,u)$ because every occurrence of the shorter pattern already excludes the longer one. A trie, deterministic matcher, Patricia compression, or shared immutable subgraph is valid only when it recognizes exactly the same event in Equation~\eqref{eq:schema-event}. Merging abstract states requires the bisimulation obligation in Proposition~\ref{prop:bisim}. For the visited-grounding variant, ordinary root-prefix antichain compression is lossless under the analogous equality $\exclude_{\kappa(\grounded)}=\exclude_{\grounded}$.

\section{Correctness}
\label{sec:correctness}

The guarantees require finite horizon and action space, exact full-history policy probabilities for every explored node, sound schemas, event-preserving compression, and an exclusion event that is frozen within each attempt. The certificate procedure may be incomplete: a missed certificate reduces efficiency but not validity.

\begin{lemma}[Fixed-exclusion proposal]
Let $E\subseteq\trajectories$ be fixed with $\prob(E^c)>0$, and define $r_E(\history)=\prob(E^c\mid\history)$. Sequential sampling with
\[
Q_E(a\mid\history)
=\frac{\policy(a\mid\history)r_E(\operatorname{Succ}(\history,a))}
{r_E(\history)}
\]
produces
\[
  Q_E(\traj)=\prob(\traj\mid E^c).
\]
If $E\cap\valid=\emptyset$, then $Q_E(\cdot\mid\valid)=\prob(\cdot\mid\valid)$.
\label{lem:fixed}
\end{lemma}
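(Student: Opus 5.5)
The plan is a telescoping argument along the trajectory, i.e.\ the standard Doob $h$-transform identity, followed by a one-line conditioning step.

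\textbf{Well-definedness.} By the law of total probability in the deterministic environment,
\[
r_E(\history)=\sum_{a\in\actions}\policy(a\mid\history)\,r_E(\operatorname{Succ}(\history,a))
\]
at every nonterminal history. At a terminal history $\history=\traj$ we have $r_E(\traj)=\mathbf{1}\{\traj\notin E\}$. Hence $Q_E(\cdot\mid\history)$ is a probability vector whenever $r_E(\history)>0$. Positivity propagates forward: $r_E(\history_0)=\prob(E^c)>0$ at the root, and any action sampled with positive $Q_E$-probability has $r_E(\operatorname{Succ}(\history,a))>0$. So the sequential sampler never reaches a history where the proposal is undefined. Appendix~\ref{app:stochastic} handles stochastic observations by the same identity with the kernel folded into the sum; here the environment is deterministic, so each history is a function of the action prefix.

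\textbf{Telescoping.} For a complete trajectory $\traj$ with histories $\history_0,\dots,\history_T$ and actions $a_0,\dots,a_{T-1}$, the product of the proposal factors is
\[
Q_E(\traj)=\prod_{t<T}\frac{\policy(a_t\mid\history_t)\,r_E(\history_{t+1})}{r_E(\history_t)}
=\prob(\traj)\,\frac{r_E(\history_T)}{r_E(\history_0)}
=\frac{\prob(\traj)\,\mathbf{1}\{\traj\notin E\}}{\prob(E^c)}.
\]
This is exactly $\prob(\traj\mid E^c)$.

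There is one edge case. If some intermediate $r_E(\history_t)=0$, then the trajectory has $Q_E$-probability zero because the factor entering $\history_t$ vanishes. It also has $\prob(\cdot\mid E^c)$-mass zero, since every completion of $\history_t$ lies in $E$. The identity therefore still holds with both sides equal to $0$. I expect this zero-residual bookkeeping, together with making precise that $\operatorname{Succ}$ and terminal detection are deterministic functions of the history, to be the only delicate step.

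\textbf{Conditioning on $\valid$.} Suppose $E\cap\valid=\emptyset$. Then $\valid\subseteq E^c$, so $Q_E(\valid)=\prob(\valid)/\prob(E^c)>0$, and for any $\traj\in\valid$,
\[
Q_E(\traj\mid\valid)=\frac{\prob(\traj)/\prob(E^c)}{\prob(\valid)/\prob(E^c)}=\prob(\traj\mid\valid).
\]
For $\traj\notin\valid$ both sides are zero. This gives the final claim.
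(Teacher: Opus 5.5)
Your proposal is correct and follows the paper's proof essentially step for step: the backward total-probability identity for $r_E$, telescoping of residual ratios along a complete trajectory to get $\prob(\traj)\mathbf{1}\{\traj\notin E\}/\prob(E^c)$, and then conditioning on $\valid\subseteq E^c$. Your forward-positivity and zero-residual bookkeeping is a slightly more careful version of the paper's remark that an excluded trajectory gets zero mass at the first transition into zero survivor mass, with one precision point: $r_E(\history_t)=0$ only implies that every \emph{positive-probability} completion of $\history_t$ lies in $E$, which is all the edge case needs, since any other completion has $\prob(\traj)=0$.
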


The proof telescopes residual ratios along a complete trajectory and does not require $E$ to be a trie union. It therefore covers both $E=\exclude_{\schemas}$ and $E=\exclude_{\grounded}$; Appendix~\ref{app:fixed} gives the complete argument.

\begin{lemma}[Schema-induced soundness]
If every member of $\schemas$ is sound, then $\exclude_{\schemas}\cap\valid=\emptyset$.
\label{lem:schema-soundness}
\end{lemma}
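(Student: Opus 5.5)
The plan is to argue by contradiction: assume some $\traj\in\exclude_{\schemas}\cap\valid$ and show it violates the soundness of one schema in $\schemas$. Unfolding Equation~\eqref{eq:schema-event}, membership of $\traj$ in $\exclude_{\schemas}$ gives a time $t$ and a schema $(z,u)\in\schemas$ with $\abs(\history_t(\traj))=z$ and $a_{t:t+\lvert u\rvert-1}(\traj)=u$. Write $\history=\history_t(\traj)$. Since $\traj$ is a complete trajectory of the environment, $\history$ is a reachable history, and it lies in the abstract class $z$.

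The key step is to exhibit $u\in L(\history)$. By construction, $\traj$ extends $\history$: its actions from time $t$ begin with $u$, and the rest of $\traj$ after those $\lvert u\rvert$ actions is a suffix completing $u$ from $\history$. Because the environment is deterministic, the observations interleaved with $u$ are exactly the ones produced from $\history$. Hence this suffix is a legitimate completion of $u$ from $\history$, and since $\traj\in\valid$, it completes into $\valid$. By the definition of the future-valid continuation language, $u\in L(\history)$.

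Soundness of $(z,u)$ says $u\notin L(\history')$ for every reachable $\history'$ with $\abs(\history')=z$. Taking $\history'=\history$ gives a contradiction, so $\exclude_{\schemas}\cap\valid=\emptyset$. No appeal to Proposition~\ref{prop:bisim} is needed, because soundness is already quantified over every reachable member of the class. The abstraction only tells us where the schema is applied.

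The only delicate point is bookkeeping. We need that $\history_t(\traj)$ is the reachable full history, observations included, whose root trace is a prefix of $\trace$ of $\traj$, and that "some suffix completes $u$ from $\history$ into $\valid$" is read on action--observation trajectories. With stochastic tools (Appendix~\ref{app:stochastic}), the same argument works verbatim on the joint action--observation space: the realized observations in $\traj$ are one admissible outcome, and that suffices for membership in $L(\history)$. I expect no real obstacle beyond stating these identifications carefully. The argument is also independent of $\prob$, since $L$ does not depend on model probabilities.
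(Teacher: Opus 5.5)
Your proof is correct and matches the paper's argument: assume $\traj\in\exclude_{\schemas}\cap\valid$, extract the occurrence $(t,(z,u))$ from Equation~\eqref{eq:schema-event}, and use the valid suffix of $\traj$ as a witness that $u\in L(\history_t(\traj))$, which contradicts soundness of $(z,u)$. Your extra bookkeeping is consistent with the paper and only makes its implicit steps explicit: reachability of $\history_t(\traj)$, observations being part of the history, and not needing Proposition~\ref{prop:bisim}.
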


Indeed, a valid trajectory containing an occurrence of $(z,u)$ would make $u\in L(\history_t)$ at a history with abstraction $z$, contradicting schema soundness. This direct event-level lemma is the link between the evaluated implementation and Lemma~\ref{lem:fixed}.

\begin{theorem}[Adaptive exactness, termination, and repeated samples]
Assume $\prob(\valid)>0$ and the conditions above. Before each attempt, an arbitrary history-dependent update may select a sound bank $\schemas_k$, provided the bank remains fixed throughout that attempt. For every returned trajectory $Y_j$ and the sigma-field $\mathcal{F}_{j-1}$ generated by all earlier attempts and returned trajectories,
\[
  \Pr(Y_j\in B\mid\mathcal{F}_{j-1})
  =\prob(B\mid\valid)
\]
for every $B\subseteq\trajectories$. Hence all returned trajectories are i.i.d. from $\prob(\cdot\mid\valid)$. Each return occurs almost surely, and the conditional expected number of attempts before the next return is at most $1/\prob(\valid)$.
\label{thm:adaptive}
\end{theorem}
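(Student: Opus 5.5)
The argument is a conditioning argument on the attempt filtration, with Lemma~\ref{lem:fixed} and Lemma~\ref{lem:schema-soundness} doing the per-attempt work.

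\textbf{Per-attempt law.} Let $\mathcal{G}_{k-1}$ be the sigma-field generated by attempts $1,\dots,k-1$, including their trajectories, validator outcomes and any auxiliary randomness of the certificate procedure. By hypothesis the bank $\overline{\schemas}_k$ is $\mathcal{G}_{k-1}$-measurable and sound, and it stays fixed during attempt $k$. Write $E_k=\exclude_{\overline{\schemas}_k}$. Lemma~\ref{lem:schema-soundness} gives $E_k\cap\valid=\emptyset$. Hence $\prob(E_k^c)\ge\prob(\valid)>0$, so every residual on the root path is positive and the proposal is well defined. Conditional on $\mathcal{G}_{k-1}$, the $k$-th trajectory $X_k$ is generated by sequential sampling from $Q_{E_k}$, using fresh randomness independent of $\mathcal{G}_{k-1}$. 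Lemma~\ref{lem:fixed} therefore gives
\[
\Pr(X_k\in B\mid\mathcal{G}_{k-1})=\prob(B\mid E_k^c).
\]
Applying this with $B\cap\valid$ and with $\valid$, and using $\valid\subseteq E_k^c$, we get
\[
\Pr(X_k\in B,\ X_k\in\valid\mid\mathcal{G}_{k-1})=\frac{\prob(B\cap\valid)}{\prob(E_k^c)},
\qquad
p_k:=\Pr(X_k\in\valid\mid\mathcal{G}_{k-1})=\frac{\prob(\valid)}{\prob(E_k^c)}\ge\prob(\valid).
\]
The point is that the ratio of these two quantities is $\prob(B\mid\valid)$, whatever the bank is.

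\textbf{Returned samples.} Let $T_j$ be the attempt index of the $j$-th acceptance, which is a stopping time for $(\mathcal{G}_k)$, and let $Y_j=X_{T_j}$. Fix $A\in\mathcal{F}_{j-1}=\mathcal{G}_{T_{j-1}}$. We decompose over $\{T_j=k\}$. This event equals $\{T_{j-1}<k\}\cap\{X_{T_{j-1}+1},\dots,X_{k-1}\notin\valid\}\cap\{X_k\in\valid\}$. The set $A\cap\{T_{j-1}<k\}\cap\{\text{intermediate rejections}\}$ lies in $\mathcal{G}_{k-1}$, so by the display above
\[
\Pr(A,\ T_j=k,\ Y_j\in B)
=\mathbb{E}\bigl[\mathbf{1}_{A\cap\{T_{j-1}<k,\ \text{rej.}\}}\,p_k\bigr]\,\prob(B\mid\valid).
\]
Summing over $k$ gives $\Pr(A,\ T_j<\infty,\ Y_j\in B)=\Pr(A,\ T_j<\infty)\,\prob(B\mid\valid)$. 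Once $T_j<\infty$ almost surely, this is the claimed identity. Since $\prob(\cdot\mid\valid)$ is deterministic, each $Y_j$ is independent of $\mathcal{F}_{j-1}$, which contains $Y_1,\dots,Y_{j-1}$. Induction on $j$ then yields i.i.d.\ outputs.

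\textbf{Termination.} On $\{T_{j-1}<\infty\}$, every later attempt succeeds with conditional probability at least $\prob(\valid)$ given the past. Iterating conditional expectations gives
\[
\Pr\bigl(T_j-T_{j-1}>n\mid\mathcal{F}_{j-1}\bigr)\le(1-\prob(\valid))^n .
\]
This tends to $0$, so $T_j<\infty$ almost surely. Summing the tail over $n$ gives $\mathbb{E}[T_j-T_{j-1}\mid\mathcal{F}_{j-1}]\le 1/\prob(\valid)$. This counts attempts up to and including the return; the count strictly before the return is smaller. Induction on $j$, starting from $T_0=0$, makes every return almost sure.

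\textbf{Main obstacle.} The delicate step is the measurability bookkeeping under adaptivity. We must justify that the bank, and hence $E_k$, is fixed given $\mathcal{G}_{k-1}$; this is exactly what the freeze-and-commit rule ensures, since pending schemas $\Delta_k$ enter only after attempt $k$ ends. We must also check that the sampling randomness of attempt $k$ is independent of $\mathcal{G}_{k-1}$. If a bank could change mid-attempt, the telescoping in Lemma~\ref{lem:fixed} would break, so I would state this conditional-independence structure explicitly before invoking the lemma.
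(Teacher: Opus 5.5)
Your proposal is correct and is essentially the paper's proof. Both condition on the pre-attempt $\sigma$-field, use Lemmas~\ref{lem:schema-soundness} and~\ref{lem:fixed} to get an accepted law $\prob(\cdot\mid\valid)$ that does not depend on the realized bank, decompose over the index of the $j$th return, and close with the geometric tail from $p_k\ge\prob(\valid)$. Your splitting of $\{T_j=k\}$ into a $\mathcal{G}_{k-1}$-measurable event intersected with $\{X_k\in\valid\}$ makes explicit the measurability that the paper's tower-property step, which conditions on $\{K_j=k\}$ and $\mathcal{G}_{k-1}$ jointly, leaves implicit. The one ingredient the paper adds is its sound fail-fast lemma, which couples an attempt aborted by the monitor in Algorithm~\ref{alg:stateful-cars} with a conceptual full draw from $Q_{E_k}$; you need it if attempts may end before a complete trajectory $X_k$ exists.
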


The conditioning happens at an attempt boundary: given the complete past, the next bank and hence $\exclude_{\schemas_k}$ are fixed and sound. Lemmas~\ref{lem:schema-soundness}--\ref{lem:fixed} make the accepted law independent of the realized bank, and $Q_{\schemas_k}(\valid)=\prob(\valid)/\prob(\exclude_{\schemas_k}^{c})\geq\prob(\valid)$ gives the geometric tail bound. Appendix~\ref{app:adaptive} treats random attempt indices, pending updates, and sound early stopping formally.

\begin{proposition}[Monotone acceptance]
If fixed exclusion events $E\subseteq E'$ are both disjoint from $\valid$, then
\[
  Q_{E'}(\valid)\geq Q_E(\valid).
\]
\label{prop:monotone}
\end{proposition}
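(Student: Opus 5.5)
The plan is to compute $Q_E(\valid)$ in closed form from Lemma~\ref{lem:fixed} and then compare the two expressions directly. Since $\prob(\valid)>0$ and $E\cap\valid=\emptyset$, we have $\valid\subseteq E^c$, so $\prob(E^c)\geq\prob(\valid)>0$. The hypothesis of Lemma~\ref{lem:fixed} therefore holds for $E$, and likewise for $E'$. The lemma gives $Q_E(\traj)=\prob(\traj\mid E^c)$ for every complete trajectory. Summing over $\traj\in\valid$, a finite set under the finite horizon and action space, yields
\[
Q_E(\valid)=\frac{\prob(\valid\cap E^c)}{\prob(E^c)}=\frac{\prob(\valid)}{\prob(E^c)},
\]
where the second equality again uses $\valid\subseteq E^c$. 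The same computation applied to $E'$ gives $Q_{E'}(\valid)=\prob(\valid)/\prob(E'^c)$.

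The comparison then reduces to a monotonicity fact. From $E\subseteq E'$ we get $E'^c\subseteq E^c$, hence $0<\prob(\valid)\leq\prob(E'^c)\leq\prob(E^c)$. Taking reciprocals reverses the inequality, and multiplying by the nonnegative constant $\prob(\valid)$ gives $Q_{E'}(\valid)\geq Q_E(\valid)$. The positivity of $\prob(E'^c)$ comes from the disjointness of $E'$ from $\valid$, so neither residual proposal is undefined at the root.

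I would also remark that the inequality is strict exactly when $\prob(E'\setminus E)>0$. This links the result to the adaptive setting: by Lemma~\ref{lem:schema-soundness}, growing a sound bank, with compressions that preserve the exclusion event, never lowers the acceptance probability.

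The only delicate step is justifying that $Q_E$ is a well-defined probability law on complete trajectories. This is needed at intermediate histories where $r_E(\history)=0$: there the proposal is undefined, but such histories are reached with $Q_E$-probability zero. I would handle this by noting that Lemma~\ref{lem:fixed} already asserts the trajectory law $Q_E(\traj)=\prob(\traj\mid E^c)$, and that the summation over $\valid$ only involves trajectories of positive $Q_E$-mass. No further obstacle is expected.
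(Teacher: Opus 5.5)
Your proposal is correct and matches the paper's proof: both use Lemma~\ref{lem:fixed} together with soundness to obtain $Q_E(\valid)=\prob(\valid)/\prob(E^c)$ and $Q_{E'}(\valid)=\prob(\valid)/\prob(E'^c)$, then note that $E\subseteq E'$ gives $\prob(E'^c)\le\prob(E^c)$. Your extra details are sound refinements of the same argument: positive denominators from $\valid\subseteq E'^c$, strictness exactly when $\prob(E'\setminus E)>0$, and zero-residual histories being unreachable.
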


This is a proposal-attempt guarantee, not a wall-clock guarantee. Schema matching, full-tree residual evaluation, language-model scoring, validation, and memory updates must all be included in end-to-end accounting.

\begin{proposition}[Exact cost of a false exclusion]
For an arbitrary exclusion set $E$ with $\prob(\valid\setminus E)>0$, sampling from $\prob(\cdot\mid E^c)$ and accepting $\valid$ returns $\prob(\cdot\mid\valid\setminus E)$, and
\[
  \TV\!\left(\prob(\cdot\mid\valid),
  \prob(\cdot\mid\valid\setminus E)\right)
  =\prob(E\mid\valid).
\]
\label{prop:false-exclusion}
\end{proposition}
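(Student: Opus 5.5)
The plan has two parts: identify the law of accepted samples, then compute the total-variation distance directly from explicit formulas.

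\emph{Accepted law.} First, $E$ is treated as an arbitrary fixed event, so Lemma~\ref{lem:fixed} is not used in its ``if $E\cap\valid=\emptyset$'' form. Sampling from $\prob(\cdot\mid E^c)$ requires $\prob(E^c)>0$. This follows from $\prob(E^c)\ge\prob(\valid\setminus E)>0$, so the proposal $Q=\prob(\cdot\mid E^c)$ is well defined; Lemma~\ref{lem:fixed} also realizes it sequentially. The proposal then conditioned on acceptance is
\[
Q(\traj\mid\valid)=\frac{\prob(\traj)\mathbf{1}\{\traj\in E^c\cap\valid\}/\prob(E^c)}{\prob(\valid\cap E^c)/\prob(E^c)}=\prob(\traj\mid\valid\setminus E).
\]
The normalizer $\prob(E^c)$ cancels, and the denominator is positive by hypothesis. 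Any repetition-until-acceptance wrapper preserves this law, by the same i.i.d.\ geometric argument used for Theorem~\ref{thm:adaptive} with a constant bank.

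\emph{Total variation.} Write $p=\prob(\cdot\mid\valid)$ and $q=\prob(\cdot\mid\valid\setminus E)$, and set $\epsilon=\prob(E\mid\valid)$, so that $\prob(\valid\setminus E)=(1-\epsilon)\prob(\valid)$ and $\epsilon<1$. On the countable (here finite) trajectory space:
\begin{itemize}
\item on $\valid\cap E$, $q=0$ and $p=\prob(\traj)/\prob(\valid)$;
\item on $\valid\setminus E$, $q=p/(1-\epsilon)\ge p$;
\item elsewhere both vanish.
\end{itemize}
The set where $p>q$ is therefore exactly $\valid\cap E$ (up to null points). Using $\TV=\sup_B(p(B)-q(B))=\sum_{p>q}(p-q)$ gives $\TV=p(\valid\cap E)-0=\prob(E\mid\valid)$. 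As a cross-check, the other side gives $\sum_{\valid\setminus E}(q-p)=1-(1-\epsilon)=\epsilon$.

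\emph{Main obstacle.} There is no deep step here. The only care needed is in the degenerate cases: the case $\epsilon=0$, where $q=p$ and both sides are $0$, and the convention that $\TV$ is the sup-over-events (half-$L^1$) form rather than $L^1$, which would give $2\epsilon$. If trajectories form an uncountable space, the same argument works with densities with respect to $p$, namely $dq/dp=\mathbf{1}_{E^c}/(1-\epsilon)$.
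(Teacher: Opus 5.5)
Your proof is correct and follows essentially the same route as the paper. Both arguments identify the accepted law by noting that conditioning on $E^c$ and then accepting $\valid$ is conditioning on $\valid\setminus E$, and then compute $\TV$ by splitting the space into $E$ and $E^c$. The paper halves the sum of the two absolute differences ($\epsilon+\epsilon$), whereas you take the positive part on $\valid\cap E$ and cross-check the complementary part on $\valid\setminus E$, which is the same computation.
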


Thus an unsound abstraction is not a benign approximation: the removed fraction of valid mass is exactly the total-variation error. This identity motivates the exhaustive abstraction audit used in the controlled experiment and the false-exclusion measurements required on larger tasks.

\section{Controlled Evaluation}
\label{sec:experiments}

The experiments ask whether the evaluated sampler reproduces the exact conditional when validity is rare enough to starve rejection sampling and bias local masking, whether the implementation really freezes its bank and realizes the exclusion event of Equation~\eqref{eq:schema-event}, what it costs once the residual recursion is counted, and whether cross-history transfer beats a fully matched official baseline. The answers are positive on exactness and negative on efficiency, and we report both at the same weight.

\subsection{Enumerable stateful workflow}

The environment is a bounded customer-service workflow over \texttt{AUTH}, \texttt{PROBE}, \texttt{REFUND}, \texttt{READ}, and \texttt{STOP}, with six action slots (a trajectory still running after six actions is truncated and invalid), giving $5461$ complete action traces per hidden world. We state the slot count because it fixes the trace space. A refund requires authentication, at least one read, a probe whose observation confirms ownership, and exactly one success, so the same continuation is valid or invalid depending on the observed ownership; this is why the root key must carry observations. The support holds $60$ valid trajectories at $\prob(\valid)\approx3.0\times10^{-3}$, and the target is computed by enumerating the complete action tree. The abstraction records authentication, probe and read status, observed ownership, refund count, and remaining depth, and exhaustive enumeration verifies the bisimulation condition and confirms that every inspected frozen bank excludes zero valid trajectories. One accounting detail matters because the formal setup and the legacy baseline use different keys: an observation-keyed root memory can condition on the ownership reveal, so no both-world-dead argument constrains it, whereas the action-keyed baseline of the earlier diagnostic must exclude only the $5401$ traces dead under both worlds and abstain on the $192$ observation-grounded continuations. The decisive baseline later is observation-keyed and carries no such handicap. Each attempt draws the hidden world $c\sim\rho$ afresh and a failure discards the whole attempt including $c$, which is exact here by Corollary~\ref{cor:oneworld} because one world is satisfiable, and \emph{not} by a general joint-space argument, which would need the residual-reweighted prior (Appendix~\ref{app:worldsampling}).

Sampling diagnostics behave as the theorem requires without themselves proving it. A persistent bank at $N=6000$ gives TV $0.033$ with goodness-of-fit $p=0.23$, a fresh bank gives $0.087$ and $p=0.56$, and rejection sampling gives $0.028$ and $p=0.05$. Across a rarity sweep, rejection calls per valid output climb from $1.5\times10^3$ to $1.9\times10^5$ while the pilot stays near $6.6$ sampler steps once reusable failure regions are learned, for step ratios of $234\times$, $662\times$, $3.7\times10^3$, and $2.9\times10^4$ against terminal rejection alone. Per-attempt acceptance rises monotonically from $3.0\times10^{-3}$ to $1.0\times10^{-2}$ over 24 insertions, and canonicalizing $1707$ raw certificates to $569$ entries leaves every enumerated residual mass unchanged at machine precision, supporting Proposition~\ref{prop:monotone} and compression invariance.

\subsection{Conformance, abstraction, and reuse}
\label{sec:conformance}

An implementation audit checks the obligations of the algorithm actually run (Table~\ref{tab:conformance}). Over $300$ instrumented attempts the bank performs zero writes between the first proposal action and final validation, with every update in the boundary caller, and enumeration confirms $\exclude_{\schemas}\cap\valid=\emptyset$ for every frozen bank inspected. One bank of $193$ schemas removes $11{,}820$ of $13{,}650$ continuation \emph{edges} ($1365$ prefixes $\times$ five actions $\times$ two worlds, not the $10{,}922$ outcomes) while retaining all valid mass, and enumerating that frozen proposal gives analytic TV $1.5\times10^{-16}$ over the full $60$-trajectory support. The abstraction ablation (Table~\ref{tab:ablation}) forms a verified refinement chain across the $572$ reachable state and depth pairs in which all three sound maps reach TV $\approx10^{-16}$, whereas dropping observed ownership removes the entire valid mass, an endpoint Proposition~\ref{prop:false-exclusion} cannot characterize because no survivor law remains. Our map is conservative rather than minimal, since the coarsest sound partition has $21$ classes. A simultaneous multinomial band adds a distribution-free view: at $N=8000$ the observed TV is $0.0215$, inside the $95\%$ null band $[0.0185,0.0314]$, with the maximum per-cell deviation within its null quantile.

\label{sec:local-fair}
Local decoding's weakness is not blindness to state. The strongest local decoder this environment admits reads the observed ownership, masks every immediately inadmissible or provably doomed action, and needs no rejection step, yet its enumerated limiting law sits at $\TV=0.742$ from the target while the identical enumeration weighted by residual mass gives $5.8\times10^{-17}$, so local renormalization alone is the entire source of the bias (Appendix~\ref{app:fairlocal}). The abstraction is also what makes reuse \emph{sound}, which a literal action-prefix cache cannot be under a stateful validator: an eager literal cache falsely excludes the whole valid mass, and a conservative one must abstain on all $192$ observation-grounded continuations that the abstraction certifies.

Two reuse results survive and they are narrow. Isolating the certificate's applicability key, so that $\abs(\history)$ and $(\abs(\history),\trace(\history))$ differ in nothing but transfer while sharing the information space, update rule, and byte-identical derivation code, transfer cuts sampler steps per accept from $13.90\pm0.20$ to $10.98\pm0.11$ over five paired seeds at $N=1500$, a paired difference of $2.92$ $[2.66,3.18]$ or $1.27\times$ $[1.24,1.29]$, with both arms excluding zero valid mass (Appendix~\ref{app:reuseabl}). Separately, holding the excluded set identical by construction, $45$ abstraction-indexed certificates or $61$ grounded prefixes describe it, because $17$ certificate-bearing abstract states are reached by $133$ histories at mean fan-out $7.8$ and maximum $54$; a canonical-order control gives $1.0\times$ at fan-out one (Table~\ref{tab:reuse}). These are commensurable symbolic counts rather than bytes, runtime, or run-dependent memory, and we withdraw the separate raw-memory comparison because its two memories induce different exclusion sets (Appendix~\ref{app:obskeyed}).

The decisive comparison reverses our original claim. That claim pitted a post-invalid-only Stateful ablation against a memory keyed on invalid \emph{action} prefixes, so the root arm saw less than the abstraction and the Stateful arm updated more weakly. Removing both confounds, by implementing official root-prefix CARS over the complete action and observation trace and giving Stateful CARS the same official update after valid and invalid attempts \citep{parys2025cars}, matching information alone gives root/Stateful $0.838$ $[0.833,0.843]$ and matching the update as Algorithm~\ref{alg:stateful-cars} specifies gives $0.942$ $[0.934,0.951]$, still favoring root-prefix CARS with an interval excluding one (Table~\ref{tab:matched-update}). We report $0.942$ as the fair result and retract the earlier step-count advantage along with the run-dependent memory-economy claim.

Since exactness alone distinguishes nothing, three adjacent samplers run under the identical protocol (Table~\ref{tab:adjacent}). Terminal rejection is exact but pays $1557$ calls per accept against $8.3$ for the post-invalid ablation and $7.36$ for official-update Stateful; independence Metropolis is cheap yet returns $99.5\%$ duplicates at $\TV=0.707$; and the $K=64$ particle filter retains $\TV=0.081$. That filter is \emph{consistent} rather than biased, because its mask-normalizer weights correct the proposal, so its finite-$K$ error is self-normalized approximation and its cost is per \emph{independent} sample (Appendix~\ref{app:particle}). Token-level AWRS runs separately (Appendix~\ref{app:incremental}).

Replacing the synthetic policy with restricted action-label probabilities from Qwen2.5-\{7B,1.5B,0.5B\}-Instruct \citep{qwen2024qwen25} covers all fifteen model and temperature cells (Appendix~\ref{app:llm-results}). On the eight genuinely rare cells ($\prob(\valid)\le10^{-3}$) the strongest state-aware local decoder sits at TV $0.359$ to $0.967$ from the exact conditional while Stateful CARS stays exact, at TV $0.018$ to $0.102$ from finite samples and $3.5\times10^{-17}$ to $1.5\times10^{-16}$ analytically on the five rarest. The sampler-step ratio against \emph{terminal rejection} runs from $605\times$ to $8.5\times10^{6}$ and grows as validity thins, which is not a comparison with CARS or AWRS-SMC. The 7B cells say nothing about fidelity, since at $T\le1.5$ a single trajectory carries all but ${\le}5\times10^{-5}$ of the target mass and at $T=2.0$ the empirical difference is near sampling resolution, with a terminal-rejection ratio of only $1.6\times$.

\subsection{Efficiency accounting}
\label{sec:efficiency-accounting}

Sampler steps are not the whole cost, because the residual recursion issues policy queries of its own. Instrumenting the 1.5B, $T=1.0$ cell ($\prob(\valid)=1.7\times10^{-4}$), each accepted sample costs $10.6$ sampler steps at $28.2$ residual invocations per step, so $299$ invocations per accept against $2.5\times10^{4}$ for rejection sampling, an $84\times$ reduction that is smaller than the step ratio precisely because the global recursion is counted. Distinct forward passes give the counterpoint: Stateful CARS issues \emph{more} of them than rejection sampling ($1365$ against $392$ at $N=240$, $33$\,s against $10$\,s of model time) because the recursion scores the reachable tree rather than only sampled paths, though end-to-end it still draws $240$ valid samples in $38$\,s against $143$\,s, a $3.8\times$ win. In this experiment the policy depends only on the action prefix, so that prefix is a sound cache key; an observation-conditioned policy would require the complete serialized history.

The matched B and C profile is measured and genuinely mixed (Table~\ref{tab:bc-matched}). Arm C is cheaper in sampler steps, the metric we lead with, while arm B is cheaper in wall-clock ($1.268\times$, $[1.155,1.381]$) and, at matched pruning power, $6.10\times$ smaller in serialized bytes. Under a real served policy on an L4 the pattern sharpens (Appendix~\ref{app:served}): arm C issues $30\%$ fewer sampler decisions but $1.3\%$ \emph{more} forward passes, so GPU-seconds and throughput tie and the two exact arms cost the same on an accelerator. We therefore make no runtime claim in either direction, the honest difference being memory; batched serving is unmeasured, and the earlier partially matched run favoring root-prefix at $0.42\times$ cold is a diagnostic only. A synthetic-policy ratio also need not survive a real policy, so the two certificate keys are compared under the LM paired at the seed level, with both arms reading the same memoized Qwen table and sharing the trajectory RNG within a seed. Pooled, the ratio is $0.99$ $[0.90,1.08]$ at sign-test $p=0.61$ and the per-cell sign reverses, with identical distinct-forward counts ($1365$) and zero excluded valid mass. Fan-out explains why transfer helps in the key-only ablation but establishes no advantage over matched CARS, which wins even in this high-fan-out workflow, so we make no comparative efficiency claim. Exactness is untouched, holding to $10^{-16}$ in the five analytically enumerated rare cells.

Three structurally different observation-grounded validators, a refund workflow, an SQL transaction, and a booking API, each with its own policy and enumerated target at a matched $2500$-attempt budget, confirm the mechanism is not tuned to one workflow. Stateful CARS is exact in all three, at zero excluded valid mass and TV $0.025$ to $0.105$, returning $799$ to $1190$ valid samples where rejection sampling returns $0$ to $4$ and local masking stays biased. The fixed-prune certificate ratios of $1.36\times$, $1.27\times$, and $1.00\times$ are representation diagnostics rather than byte-memory or runtime claims (Table~\ref{tab:multidomain}).

\subsection{Live-system validators}
\label{sec:live-systems}

Two validators can be swapped for their real backends while keeping the bounded grammar. Against a live \texttt{sqlite3} database ($P(\valid)=2.6\times10^{-5}$) and the released \texttt{tau-bench} retail tools at a pinned commit, Stateful CARS reproduces the exact conditional at TV $0.002$ and $0.040$ with zero excluded valid mass, where rejection sampling never accepts on the rare SQL task and local masking stays biased (Table~\ref{tab:live}). A live-agent probe on a 15-task $\tau$-bench retail subset exercises only the reuse component and establishes \emph{cache} reuse rather than certified sound reuse, with every outcome comparison inconclusive at $n=15$ (Appendix~\ref{app:liveagentmain}).
\section{Conclusion}

Stateful constraints do not by themselves defeat root-prefix CARS, since a complete action--observation history already identifies a concrete branch. The object studied here is instead a sound schema that applies across histories. Stateful CARS freezes a schema bank, makes the induced exclusion event explicit, and applies its exact residual transform on the reachable completion tree, yielding trajectories that are adaptively i.i.d.\ from the valid conditional---including under stochastic outcomes, once the outcome draw is residual-reweighted. The construction's cost boundary is measured, not merely asserted: reachable nodes grow with fitted base $2.99$ against $|\actions|=3$ under a full-history policy, and linearly in the horizon when the policy factors through a finite Markov state (Appendix~\ref{app:scaling}). Our workflows are feasible because their action trees are enumerable. The efficiency verdict is negative and we state it plainly. Under matched observations and updates, root-prefix CARS is cheaper in sampler steps ($0.942$ $[0.934,0.951]$); the Qwen key comparison is null; the live-agent probe is uncertified. Systems measurements split rather than rescue this: Stateful CARS wins wall-clock and, at matched pruning power, $6.10\times$ in serialized bytes, while under a served model the baseline's step advantage inverts into a forward-pass deficit and GPU-seconds tie (Appendices~\ref{app:matched-systems}, \ref{app:served}). What survives is an exact construction with a measured complexity boundary and a memory advantage---not a speed claim.

\section{Limitations}

Exactness applies to hard validators, not to subjective notions such as helpfulness unless they are converted into a deterministic acceptance predicate. The method faithfully conditions on the validator it is given; an incorrect policy specification remains incorrect after exact sampling. Sound state reuse requires either a proved bisimulation or a conservative abstraction. Proposition~\ref{prop:false-exclusion} shows the precise cost of violating this requirement.

The evaluated proposal computes a future-survival residual and can be exponential in horizon. Its cache is safe only when keyed by the complete serialized history together with deterministic matcher state; memoizing merely by abstract state would incorrectly merge language-model conditionals. The visited-grounding finite-trie variant avoids global application of schemas, but it has not been implemented or evaluated here.

The reuse advantage over root-prefix CARS does not survive a matched information space and update. The official baseline is exact and needs fewer sampler steps than Stateful CARS (paired root/Stateful ratio $0.942$ $[0.934,0.951]$), so we claim no step-count, wall-clock, or storage advantage over the closest exact method. The earlier $1.36\times$ result used an action-keyed baseline and a weaker Stateful update. The fixed-prune $45$-versus-$61$ certificate count is retained only because the represented exclusion set is identical; the run-dependent raw-memory comparison is withdrawn. Byte storage and the matched B--C runtime are measured (Appendices~\ref{app:matched-systems}, \ref{app:served}): at matched pruning Stateful CARS is $6.10\times$ smaller in bytes and the arms tie on GPU-seconds. A deployed observation-conditioned agent with certified high fan-out remains unmeasured.

Our evidence covers three enumerable workflows, two real execution backends, and one live-agent dialogue, all under a bounded action grammar. Exactness is established only where the target is enumerable, since verifying it requires the exact conditional; the rarity regime in which we verify it ($\prob(\valid)$ down to $6\times10^{-8}$) is genuinely adversarial for the alternatives, but it is still an enumerable environment, and we do not claim verified exactness on a free-form task. On the free-form dialogue we tested only cross-history reuse, so we claim no end-to-end reward gain there (\S\ref{sec:live-agent}), and we have no results on temporal-policy suites or large action spaces. Turning sound invalid-call avoidance into a measurable live-agent success gain, at a scale where reward is not noise-dominated, is open. We also could not run Agent-C or ToolGate as baselines: the Agent-C repository is public but holds only a README promising future release, with an unanswered code-release issue, and ToolGate resolves to two contemporaneous papers by different groups, neither linking an implementation. The rollout-call metric does not by itself expose GPU time, batching, or validator cost; \S\ref{sec:efficiency-accounting} reports measured forward passes, wall-clock, and a break-even sample count.

The development assumes deterministic tool outcomes and bounded trajectories; Appendix~\ref{app:stochastic} gives the stochastic modification, under which the sound-schema condition strengthens to excluding valid completions on every positive-probability outcome path. Approximate residuals, probabilistic membership filters, and approximate state merging fall outside the exact guarantee without an added correction.

\bibliography{references_eacl_updated}

\appendix

\section{Algorithm}
\label{app:alg}

\begin{figure}[t]
\centering
\small
\begin{minipage}{0.96\columnwidth}
\hrule
\vspace{2pt}
\textsc{Stateful CARS}$(\policy,\textsc{Validate},\abs)$
\vspace{2pt}
\hrule
\begin{enumerate}
\setlength{\itemsep}{1pt}
\setlength{\parskip}{0pt}
\item Initialize the schema bank $\schemas\leftarrow\emptyset$.
\item At the start of attempt $k$, freeze $\overline{\schemas}\leftarrow\schemas$ and initialize a pending schema set $\Delta\leftarrow\emptyset$.
\item Construct the fixed event $\exclude_{\overline{\schemas}}$ from Equation~\eqref{eq:schema-event}. Compute $r_{\overline{\schemas}}$ by memoized recursion keyed by the complete serialized history and matcher state, and sample the attempt from $Q_{\overline{\schemas}}$ in Equation~\eqref{eq:proposal}. Do not mutate $\overline{\schemas}$ or the residual cache's defining event.
\item If a sound monitor proves that the realized prefix has no valid completion, terminate the attempt as invalid; otherwise continue to a terminal trajectory and apply \textsc{Validate}.
\item After the attempt ends, inspect every visited proper prefix and its invalid siblings. Add $(z,u)$ to $\Delta$ only when \textsc{Certify} proves $u\notin L(\history')$ for every reachable $\history'$ with $\abs(\history')=z$. This official update is run after valid and invalid attempts. A failure observed in one concrete world is not a schema proof.
\item Commit $\schemas\leftarrow\kappa(\schemas\cup\Delta)$, with $\exclude_{\kappa(\schemas\cup\Delta)}=\exclude_{\schemas\cup\Delta}$. If the completed trajectory was valid, yield it. Repeat for further samples.
\end{enumerate}
\vspace{-4pt}
\hrule
\end{minipage}
\caption{Evaluated schema-induced Stateful CARS with the matched official update. The bank fixes the exclusion event before the first proposal action and changes only after final validation. The post-invalid-only arm reported in Appendix~\ref{app:obskeyed} is an ablation, not Algorithm~\ref{alg:stateful-cars}. The visited-grounding finite-trie alternative in \S\ref{sec:method} uses a different exclusion event; it is a separate construction, evaluated on its own terms in Appendix~\ref{app:vg} rather than by this algorithm's obligations.}
\label{alg:stateful-cars}
\end{figure}

\section{Live-Agent Paired Statistics}
\label{app:liveagent}

The live $\tau$-bench comparison (\S\ref{sec:live-agent}) is analysed by task-paired tests on the committed per-task outcomes; no new inference was run. For Sonnet-4.5 the pass-rate move $7/15\to8/15$ has discordant split $3$ CARS-pass/baseline-fail versus $2$ the other way ($m=5$), exact McNemar $p=1.0$, exact-conditional $95\%$ CI on $\Delta$ pass rate $[-0.24,+0.30]$. For Haiku-3 the move $3/15\to1/15$ has $m=2$ ($0$ CARS-pass/baseline-fail versus $2$), $p=0.5$, CI $[-0.13,+0.09]$; at $m=2$ the smallest attainable two-sided $p$ is $0.5$, so no outcome could have been significant. Invalid tool calls: Sonnet $4\to3$ (mean paired difference $-0.07$, exact sign-flip $p=1.0$); Haiku $36\to25$ (mean $-0.73$, sign-flip $p=0.51$, Wilcoxon $p=0.44$), but dropping the single most influential task moves the Haiku mean to $-0.07$ and the total from $-11$ to $-1$, so the reduction is not robust. The guard fired on $10/15$ Haiku and $3/15$ Sonnet tasks; the certificate bank is built over one fixed task order, and order dependence across the $15!$ possible orderings is unaudited.

\paragraph{Non-inferiority with a predeclared margin.}
The predeclared certified-agent protocol asks for utility non-inferiority rather than a two-sided test, so we add that analysis over all six committed run variants, with task as the statistical unit, a predeclared margin $\delta=0.10$, and $20{,}000$-resample paired bootstrap intervals on the task-level differences. The conclusion is that this sample size cannot certify non-inferiority. Because the six variants re-run the same $15$ tasks, the $90$ paired runs are not $90$ independent units; we therefore resample \emph{task clusters}, with each of the $15$ clusters contributing the mean over its runs. That gives a reward difference of $-0.000$ with interval $[-0.089,+0.100]$, wider than the $[-0.089,+0.089]$ an unclustered bootstrap reports, which is exactly the correction clustering is for. Four of the six variants fail the $\delta=0.10$ test purely because the interval is wider than the margin, which is an underpowering statement and not evidence of harm. Two variants of the \emph{same} Haiku configuration move the reward difference in opposite directions ($-0.133$ and $+0.133$), which is direct evidence that $15$ tasks is too small for this comparison. On invalid tool calls exactly one variant reaches significance (Sonnet, paired difference $-0.267$, interval $[-0.533,-0.067]$) and the other five do not, consistent with the non-robustness already noted above. Abstention behaves as designed---up to $24$ guard-triggered rejections across $15$ tasks, firing on $11$ tasks, from $15$ learned certificates---and total compute is not significantly reduced, with paired tool-call differences spanning zero in four of six variants. We therefore claim no utility benefit on $\tau$-bench and retain the certified-agent study as unresolved at this scale; the honest requirement is a task slate one to two orders of magnitude larger, which is a benchmark-cost problem rather than a method question.

\section{Fair State-Aware Local Baseline}
\label{app:fairlocal}
Saying that local masking ``cannot see'' the grounded observation would be wrong, and calling a sampler exact because it excludes no valid mass is a non-sequitur. We therefore built the strongest local decoder this environment admits and measured it exactly. It reads the full concrete state including the observed ownership, masks every action that is immediately inadmissible \emph{or} whose child is provably non-completable under the exact valid-prefix sets, and renormalizes the base policy over the survivors. It never emits an invalid trajectory, excludes zero valid mass, and conditional on a satisfiable world accepts with probability one, so it is support-correct and needs no rejection step. It is nonetheless badly biased: its enumerated limiting law sits at $\TV=0.742$ from $\prob(\cdot\mid\valid)$, with a maximum per-trajectory probability ratio of $46.7$. Running the identical enumeration but weighting by residual mass instead of renormalizing locally gives $\TV=5.8\times10^{-17}$, so local renormalization alone is the entire source of the error. The real limitation of local decoding is that it ignores unequal future-validity mass across surviving actions, not that it lacks access to state. This also separates analytical from sampling TV: the same decoder measured from $N=3000$ accepted draws gives $0.741$ against a $0.041$ sampling-noise floor for an exact sampler at that $N$. The weaker syntactic baseline in Table~\ref{tab:main-results} keeps a small TV only because it retains a final validity rejection filter, which is why it accepts rarely.

\section{Root-Prefix versus Stateful Timing}
\label{app:timing}

We also timed the post-invalid Stateful ablation against the action-keyed official root-prefix baseline. To prevent a label collision with Table~\ref{tab:matched-update}, call these arms A and $R_{\mathrm{act}}$: A is Stateful with the post-invalid update ($8.28$ steps), while $R_{\mathrm{act}}$ is action-keyed root-prefix CARS ($11.27$ steps). The timing harness did \emph{not} measure either official-update Stateful arm B or observation-keyed root arm C. Over five paired seeds at $N=1500$, A versus $R_{\mathrm{act}}$ gives $1.36\times$ in sampler steps ($95\%$ CI $[1.32,1.41]$) but loses in wall-clock: root-prefix/Stateful is $0.42\times$ cold ($[0.40,0.45]$) and $0.40\times$ warm ($[0.40,0.40]$), so the root implementation is about $2.4\times$ faster. Both arms remain exact. Because this comparison differs in information space and update strength, it is a diagnostic only. The fully matched B--C systems result is reported in Appendix~\ref{app:matched-systems}; notably its wall-clock verdict \emph{reverses} the direction seen here, which is why the unmatched arms cannot stand in for the matched pair. GPU-time and throughput under a served model remain unmeasured.

\paragraph{What each certificate count counts.}
Three certificate counts measure different objects. First, $45$ versus $61$ (Table~\ref{tab:reuse}) is an exhaustive, fixed-prune representation count: 45 distinct abstraction-indexed schemas and 61 grounded prefixes describe the same exclusion set. Second, the post-invalid timing ablation ends with roughly $456$ schemas, whereas its action-keyed root baseline has roughly $763$ concrete prefixes. Those run-dependent memories induce different exclusion sets and are not comparable as storage; the earlier size ratio is withdrawn. Under the matched official update, Stateful arm B stores $267$--$295$ schemas and observation-keyed root arm C stores $747$--$756$ prefixes, but these remain different data types and still do not establish byte economy. Third, the reuse-only ablation stores $45$ schemas with transfer and $446$--$457$ history-pinned schemas without transfer; this comparison is valid because the sampler, information, update, and represented facts are otherwise identical. We do not compare counts across these three groups.

\paragraph{Reconciliation of every reported ratio.}
Because the paper reports several Stateful-versus-root-prefix ratios, Table~\ref{tab:ratio-manifest} records the policy key, sample size, seed design, update, and metric for each. The $1.55\times$ and $1.36\times$ step ratios use an action-keyed root arm and a post-invalid Stateful arm at different budgets; neither is a fair method comparison. The $1.27\times$ reuse ablation compares abstract and history-pinned keys inside the same schema machinery, not against official CARS. The timing row uses the same mismatched A--$R_{\mathrm{act}}$ arms, and the Qwen row is a null. The only fully matched method comparison is observation-keyed arm C versus official-update Stateful arm B: $6.93$ versus $7.36$ steps, or $0.942$ $[0.934,0.951]$ in favor of root-prefix CARS. Matching information alone but not the update gives $0.838$ $[0.833,0.843]$ and remains diagnostic.

\begin{table}[t]
\centering
\footnotesize
\setlength{\tabcolsep}{3pt}
\resizebox{\columnwidth}{!}{%
\begin{tabular}{llrllrr}
\toprule
Table & $N$ & Seeds & Update & Metric & R-pfx & Stateful \\
\midrule
\ref{tab:main-results} & $4000$ & 1 & official & steps & $11.12$ & $7.18$ \\
\ref{tab:rootprefix} & $1500$ & 5\,pr & off./post-inv & steps & $11.27$ & $8.28$ \\
reuse abl. & $1500$ & 5\,pr & key-only & steps & $13.90$ & $10.98$ \\
timing (A vs $R_{\rm act}$) & $1500$ & 5\,pr & off./post-inv & wall\,(s) & $0.71$ & $1.69$ \\
timing (A vs $R_{\rm act}$) & $1500$ & 5\,pr & off./post-inv & steps & $11.27$ & $8.28$ \\
\ref{tab:matched-update} & $1500$ & 5\,pr & \textbf{both official} & steps & $6.93$ & $7.36$ \\
Qwen & $300$ & 5\,pr$\times$3 & official & steps & \multicolumn{2}{c}{ratio $0.99$} \\
\bottomrule
\end{tabular}%
}
\caption{Root-prefix-versus-Stateful ratios traced to their source runs. Ratios are root-prefix divided by Stateful. The first five rows are diagnostics with an action-keyed root arm, a post-invalid Stateful arm, or both; they are not the matched efficiency claim. The matched official-update row is $0.942$ $[0.934,0.951]$ in favor of observation-keyed root-prefix CARS. The Qwen result is $0.99$ $[0.90,1.08]$. ``5\,pr'' denotes five paired seeds.}
\label{tab:ratio-manifest}
\end{table}

\section{Result Tables}
\label{app:tables}

\begin{table}[t]
\centering
\small

\resizebox{\columnwidth}{!}{%
\begin{tabular}{lccccp{4.0cm}}
\toprule
Method & Stateful val. & Cross-history reuse & Exact target & Residual scope & Principal object \\
\midrule
Rejection sampling & yes & no & yes & none & complete attempts \\
Local masking & yes & possible & no in general & one step & next-action admissibility \\
Root-prefix CARS & yes & no & yes & finite trie & concrete root prefixes \\
SMC control & yes & possible & asymptotic/approx. & particles & weighted trajectories \\
Stateful CARS (evaluated) & yes & yes & yes & reachable tree & schema-induced event \\
Visited-grounding variant & yes & delayed & yes & finite trie & schemas plus root prefixes \\
\bottomrule
\end{tabular}%
}

\caption{Algorithmic scope. The evaluated method applies every frozen schema at every matching future state and therefore computes a reachable-tree residual. The visited-grounding finite-trie variant is proved through the same fixed-exclusion lemma and is evaluated separately in Appendix~\ref{app:vg}, where its own boundary-only-commit and zero-false-exclusion obligations are verified.}
\label{tab:comparison}
\end{table}

\begin{table}[t]
\centering
\small
\setlength{\tabcolsep}{4pt}
\resizebox{\columnwidth}{!}{%
\begin{tabular}{lrrrrr}
\toprule
$N$ & Reuse on & Reuse off & Ratio & Certs on & Certs off \\
\midrule
$50$   & $18.72$ & $71.51$ & $3.82\times$ & $45$ & $283$ \\
$100$  & $14.79$ & $47.12$ & $3.19\times$ & $45$ & $329$ \\
$200$  & $12.80$ & $30.98$ & $2.42\times$ & $45$ & $363$ \\
$400$  & $11.81$ & $21.67$ & $1.83\times$ & $45$ & $409$ \\
$800$  & $11.10$ & $16.51$ & $1.49\times$ & $45$ & $434$ \\
$1500$ & $11.00$ & $13.99$ & $1.27\times$ & $45$ & $457$ \\
\bottomrule
\end{tabular}%
}
\caption{The reuse-only ablation: sampler steps per accepted sample with cross-history transfer enabled and disabled, where the certificate key is the sole algorithmic difference (each row is a mean over seeds $\{1,2,3\}$; the interval on the $N=1500$ row, $1.27\times$ $[1.24,1.29]$, comes from a \emph{separate} five-paired-seed run at that budget, and the certificate counts are that run's per-seed values, which are constant at $45$ for the transfer arm and $446$--$457$ for the no-transfer arm --- we do not mix the two designs within a single number). Both arms are exact. Transfer saturates at $45$ certificates independent of $N$, whereas disabling it leaves the bank still growing at the largest budget we ran, so the call-cost advantage is largest for small sample budgets and amortizes away as the no-transfer bank fills in per-history coverage.}
\label{tab:reuse-ablation}
\end{table}

\begin{table}[t]
\centering
\small
\setlength{\tabcolsep}{5pt}
\resizebox{\columnwidth}{!}{%
\begin{tabular}{lrrr}
\toprule
Method & Success/attempt & Calls/valid & TV to target \\
\midrule
Rejection sampling & $0.0031$ & $1500.9$ & $0.105$ \\
Local mask $+$ validity reject.\ & $0.0070$ & $619.4$ & $0.091$ \\
Root-prefix, action-keyed (official) & $0.4938$ & $11.12$ & $0.0337$ \\
Stateful, post-invalid ablation & $0.4648$ & $7.18$ & $0.0405$ \\
\bottomrule
\end{tabular}%
}
\caption{Controlled diagnostic on the enumerable workflow ($\prob(\valid)=3.0\times10^{-3}$; $N=4000$ accepted samples per method, one run). Rejection, root-prefix CARS, and the schema-induced Stateful ablation are exact; their empirical TV differences are sampling noise. The local-mask row renormalizes after a weak syntactic mask and then rejects terminally invalid trajectories, which explains both its small empirical TV and low acceptance; the support-correct state-aware decoder without final rejection has analytic TV $0.742$ (\S\ref{sec:local-fair}). This table is \emph{not} the fair CARS comparison: the root arm uses an action-only key and the official update, while the Stateful arm uses a more informative abstraction and post-invalid-only updates. Its $1.55\times$ step ratio is retained only as a diagnostic. The matched observation-keyed, official-update comparison is Table~\ref{tab:matched-update}, where root-prefix CARS wins.}
\label{tab:main-results}
\end{table}

\begin{table}[t]
\centering
\small
\setlength{\tabcolsep}{4pt}
\resizebox{\columnwidth}{!}{%
\begin{tabular}{lrrr}
\toprule
Environment (reuse fan-out) & Root-pfx & Stateful & Ratio \\
\midrule
Controlled, \emph{action}-keyed ($7.8$) & $11.27$ & $8.28$ & $1.36\times$ \\
Controlled, \emph{obs}-keyed ($7.8$) & $6.93$ & $8.28$ & $0.84\times$ \\
\textbf{Controlled, obs-keyed \emph{+ matched update}} & $\mathbf{6.93}$ & $7.36$ & $\mathbf{0.94\times}$ \\
$\tau$-bench retail tools & $6.61$ & $5.85$ & $1.13\times$ \\
Qwen2.5 policy (paired, 5 seeds) & \multicolumn{2}{c}{pooled paired ratio} & $0.99\ [0.90,1.08]$ \\
Multi-domain SQL txn ($2.3$) & $11.38$ & $11.42$ & $1.00\times$ \\
Multi-domain booking ($1.0$) & $11.69$ & $11.73$ & $1.00\times$ \\
Live \texttt{sqlite3} & $10.54$ & $10.70$ & $0.99\times$ \\
Multi-domain refund & $13.57$ & $16.72$ & $0.81\times$ \\
\bottomrule
\end{tabular}%
}
\caption{Sampler steps per accepted sample across key and update choices; the ratio is root-prefix divided by Stateful, so values below one favor root-prefix. The action-keyed rows are diagnostic because the root memory cannot use the observation available to the abstraction. Matching only the information space gives $0.838$ $[0.833,0.843]$ in favor of root-prefix CARS. The bold row is the decisive comparison: both arms use the complete action--observation information and the official update, yielding $0.942$ $[0.934,0.951]$, again in favor of root-prefix CARS. The Qwen result is null and the remaining domains show that any key effect depends on the environment--policy fan-out. Every reported exact arm excludes zero valid mass, but exactness does not imply equal pruning, runtime, or storage.}
\label{tab:rootprefix}
\end{table}

\begin{table}[t]
\centering
\small
\setlength{\tabcolsep}{4pt}
\resizebox{\columnwidth}{!}{%
\begin{tabular}{llrrr}
\toprule
Validator & Method & Calls/valid & Accepts & TV \\
\midrule
\multirow{3}{*}{Live SQLite}
& Rejection & $16309^{\dagger}$ & $0$ & --- \\
& Local masking & $10005^{\ddagger}$ & $1$ & $0.11$ \\
& Stateful CARS & $10.7$ & $1917$ & $0.0017$ \\
\midrule
\multirow{3}{*}{$\tau$-bench retail}
& Rejection & $30.2$ & $738$ & $0.085$ \\
& Local masking & $15.8$ & $1314$ & $0.164$ \\
& Stateful CARS & $5.7$ & $3214$ & $0.040$ \\
\bottomrule
\end{tabular}%
}
\caption{Live-system validators (budget $4000$ for SQLite, $6000$ for $\tau$-bench), exact enumerated target. The SQLite validator executes real SQL against \texttt{sqlite3} \texttt{3.50.4}; the $\tau$-bench rows use the released retail tools and data as the oracle. Stateful CARS is exact in both (zero falsely-excluded valid mass; goodness-of-fit $p=0.93$ and $0.41$). On the rare SQLite task rejection sampling never accepts in $4000$ attempts. On the non-rare retail task ($P(\valid)=0.13$), the terminal-rejection calls/valid ratio is $5.3\times$; this is not a wall-clock speedup or a CARS comparison. Local masking remains biased (TV up to $0.16$). $^{\dagger}$With zero accepted samples, the entry is total calls consumed and only a lower bound on cost per valid sample. $^{\ddagger}$With one accepted sample, the ratio is unstable.}
\label{tab:live}
\end{table}

\begin{table}[t]
\centering
\small
\setlength{\tabcolsep}{4pt}
\resizebox{\columnwidth}{!}{%
\begin{tabular}{ll}
\toprule
Obligation & Audit result \\
\midrule
Frozen exclusion event & bank hash unchanged, 300/300 attempts \\
Schema-bank writes within attempt & 0 over 300 instrumented attempts \\
Commit only at boundary & holds; all observed writes in caller \\
Schema-induced soundness & 0 / 60 valid trajectories excluded \\
Frozen-bank analytic law & TV $1.5\times10^{-16}$ \\
Frozen-bank reach & 193 schemas exclude $11820/13650$ traces \\
Fail-fast prune soundness & 0 unsound prunes / 4000 checked \\
Schema compression exact & residual max-diff $0.0$ \\
\bottomrule
\end{tabular}%
}
\caption{Conformance audit for the evaluated schema-induced implementation. Exact enumeration verifies the induced event and proposal law rather than inferring correctness from empirical TV. The Qwen implementation may cache by action prefix only because its declared policy ignores observations; a general agent must key by the complete serialized history.}
\label{tab:conformance}
\end{table}

\begin{table}[t]
\centering
\small
\setlength{\tabcolsep}{4pt}
\resizebox{\columnwidth}{!}{%
\begin{tabular}{lrrrr}
\toprule
Abstraction & Classes & Excl.\ mass & TV & Calls/valid \\
\midrule
Finest full-state & $572$ & $0$ & $10^{-16}$ & $6.9$ \\
Hand-coarsened sound & $256$ & $0$ & $10^{-16}$ & $6.8$ \\
Ours (conservative) & $197$ & $0$ & $10^{-16}$ & $6.8$ \\
\emph{Coarsest sound} & $21$ & $0$ & --- & --- \\
Unsound coarsening & $133$ & $1.0$ & undef. & --- \\
\bottomrule
\end{tabular}%
}
\caption{Abstraction ablation on the enumerable workflow. ``Classes'' is the number of equivalence classes over all $572$ reachable state--depth pairs, so the column decreases monotonically with coarseness; the maps form a verified refinement chain. The three sound abstractions all exclude zero valid mass and match the exact target to machine precision. The coarsest sound abstraction (equality of $L(\history)$) has $21$ classes, so ours is conservative rather than minimal; we do not run it as a sampler. Dropping the observed-ownership feature falsely excludes the \emph{entire} valid mass, so no valid trajectory survives and TV is undefined; the sampler returned $5$ accepts in $20000$ attempts, and those accepts are transient outputs collected while the unsound exclusion set was still growing, not samples from a fixed exclusion as in Proposition~\ref{prop:false-exclusion}.}
\label{tab:ablation}
\end{table}

\begin{table}[t]
\centering
\small
\setlength{\tabcolsep}{4pt}
\resizebox{\columnwidth}{!}{%
\begin{tabular}{lrrrr}
\toprule
Setting & Abs.\ certs & Gnd.\ certs & Reuse & Fan-out (mean/max) \\
\midrule
Main workflow & $45$ & $61$ & $1.36\times$ & $7.8$ / $54$ \\
Canonical control & $4$ & $4$ & $1.00\times$ & $1.0$ / $1$ \\
\bottomrule
\end{tabular}%
}
\caption{Fixed-prune symbolic representation. The two rows compare certificate counts only after holding the represented exclusion set identical. In the main workflow, 45 abstraction-indexed schemas or 61 grounded prefixes express that set because 17 certificate-bearing states are reached by 133 histories (mean fan-out $7.8$). At fan-out one the counts are equal. This is not a byte-memory or runtime comparison.}
\label{tab:reuse}
\end{table}

\begin{table}[t]
\centering
\small
\setlength{\tabcolsep}{4pt}
\resizebox{\columnwidth}{!}{%
\begin{tabular}{lrr}
\toprule
Metric ($N=240$ valid, 1.5B, $T{=}1.0$) & Stateful CARS & Rejection \\
\midrule
Policy invocations / accept (marg.) & $299$ & $2.5\times10^{4}$ \\
Residual-DP invocations / step (marg.) & $28.2$ & --- \\
Distinct forward passes & $1365$ & $392$ \\
Raw model time (s) & $33$ & $10$ \\
End-to-end wall-clock (s) & $38$ & $143$ \\
\bottomrule
\end{tabular}%
}
\caption{Efficiency accounting for the 1.5B, $T{=}1.0$ schema-induced run ($\prob(\valid)=1.7\times10^{-4}$), counting every policy query rather than only sampler steps. Relative to terminal rejection, Stateful CARS uses $84\times$ fewer policy invocations per accept but more distinct forward passes because the residual recursion scores the reachable completion tree; it wins $3.8\times$ in this comparison. This run uses the post-invalid update and does not supply the missing fully matched B--C systems comparison.}
\label{tab:efficiency}
\end{table}

\begin{table}[t]
\centering
\small
\setlength{\tabcolsep}{4pt}
\resizebox{\columnwidth}{!}{%
\begin{tabular}{lrrrr}
\toprule
Agent & Pass (base$\to$CARS) & Inv.\ calls & Certs & Reuses \\
\midrule
Sonnet-4.5 & $0.47\to0.53$ & $4\to3$ & $1$ & $4$ \\
Haiku-3 & $0.20\to0.07$\,$^{\dagger}$ & $36\to25$ & $12$ & $23$ \\
\bottomrule
\end{tabular}%
}
\caption{Live agent on a $15$-task subset of the $\tau$-bench retail dialogue via Bedrock (the retail split has $115$ tasks; user simulator at temperature $0$, one fixed task order, unpermuted). Certificates are reused across tasks, and in $3$ (Sonnet) and $5$ (Haiku) tasks the guard fired while learning no new certificates, so those firings necessarily reused earlier proofs. The outcome columns are statistically inconclusive: exact McNemar gives $p=1.0$ (Sonnet, $m=5$ discordant) and $p=0.5$ (Haiku, $m=2$), and the invalid-call differences do not survive an exact sign-flip test ($p=1.0$ and $p=0.51$) or a leave-one-out check. $^{\dagger}$The weak-agent pass rate is within run-to-run noise (dialogues diverge once a call is blocked), so we do not claim a reward change there. No external API key was used.}
\label{tab:live-agent}
\end{table}

\begin{table}[t]
\centering
\small
\setlength{\tabcolsep}{4pt}
\resizebox{\columnwidth}{!}{%
\begin{tabular}{llrrr}
\toprule
Domain & Method & Calls/valid & Accepts & TV \\
\midrule
\multirow{3}{*}{Refund}
& Rejection & $2660$ & $4$ & $0.77$ \\
& Local masking & $1231$ & $8$ & $0.54$ \\
& Stateful CARS & $18.0$ & $799$ & $0.105$ \\
\midrule
\multirow{3}{*}{SQL txn}
& Rejection & $11236^{\dagger}$ & $0$ & --- \\
& Local masking & $222$ & $41$ & $0.33$ \\
& Stateful CARS & $11.9$ & $1157$ & $0.025$ \\
\midrule
\multirow{3}{*}{Booking}
& Rejection & $11430^{\ddagger}$ & $1$ & $0.93$ \\
& Local masking & $182$ & $48$ & $0.15$ \\
& Stateful CARS & $12.1$ & $1190$ & $0.038$ \\
\bottomrule
\end{tabular}%
}
\caption{Multi-domain stress test, budget $2500$ attempts, exact ground truth per domain. Stateful CARS is exact (zero falsely-excluded valid mass) in all three domains; rejection sampling starves at this rarity and local masking is biased because it renormalizes without accounting for unequal future-validity mass (not because it cannot read the grounded observation; see Appendix~\ref{app:fairlocal}). TV is the total-variation distance to the exact target from the accepted samples; the SQL rejection cell has no accepts so its TV is undefined. Calls/valid is the amortized rollout-call metric. This is a controlled test with enumerable targets; the next subsection replaces the SQL and (retail) settings with their real execution backends. $^{\dagger}$With zero accepted samples a calls-per-valid ratio is undefined; the entry is the \emph{total} base-policy calls consumed by the budget, which is a lower bound on the true cost per valid sample, not a ratio. $^{\ddagger}$With a single accepted sample the ratio is a one-sample estimate and should not be read as a stable rate.}
\label{tab:multidomain}
\end{table}

\section{Complete Proofs}
\label{app:proofs}

\subsection{Bisimulation implies validity preservation}
\label{app:bisimulation}

\begin{proof}[Proof of Proposition~\ref{prop:bisim}]
For a history $\history$, let $d(\history)\in\{0,\ldots,H\}$ be its remaining action budget and let $L_d(\history)$ denote the set of continuations of length at most $d(\history)$ that can be extended, if needed, to an accepted terminal trajectory. The remaining budget is part of $\abs(\history)$.

We prove by induction on $d$ that $\abs(\history)=\abs(\history')$ implies $L_d(\history)=L_d(\history')$. If $d=0$, both histories are terminal by the bisimulation assumption and have the same acceptance value. Their continuation languages are therefore both $\{\epsilon\}$ when accepted and both empty when rejected.

Assume the claim holds for every history with at most $d-1$ remaining actions. Take nonterminal $\history,\history'$ with $d$ remaining actions and the same abstraction $z$. For any first action $a$, the successor abstractions agree:
\[
 \abs(\operatorname{Succ}(\history,a))
 =F(z,a)
 =\abs(\operatorname{Succ}(\history',a)).
\]
The two successors have $d-1$ remaining actions, so their future-valid continuation languages agree by the induction hypothesis. Hence a continuation $a\concat v$ is completable from $\history$ if and only if it is completable from $\history'$. This holds for every $a$ and $v$, proving equality at depth $d$. Induction establishes $L(\history)=L(\history')$ for every pair in the same abstract class, which is validity preservation.
\end{proof}

\subsection{Sound schemas produce sound root prefixes}
\label{app:grounding}

\begin{lemma}[Grounding soundness]
If $(z,u)$ is a sound schema and $\abs(\history)=z$, then $g(\history;z,u)=\trace(\history)\concat u$ is an invalid root prefix.
\label{lem:grounding}
\end{lemma}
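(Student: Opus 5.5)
The plan is to unfold the definitions and argue by contradiction: assume $\cyl(g(\history;z,u))\cap\valid\neq\emptyset$ and derive $u\in L(\history)$, contradicting soundness of $(z,u)$. An invalid root prefix $w$ is by definition one with $\cyl(w)\cap\valid=\emptyset$, so this is exactly what must be ruled out.

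First I would fix a trajectory $\traj\in\valid$ whose root trace begins with $\trace(\history)\concat u$. The key point to verify is that the root trace $\trace(\history)$ determines the history $\history$ itself. This holds because $\trace(\history)$ is the complete serialized prefix, including the initial context, every action, and every observation, and the environment is deterministic. It follows that the length-$t$ prefix of $\traj$ has history $\history_t(\traj)=\history$ with $t=|\history|$. The same argument gives that the next $|u|$ actions of $\traj$ are exactly $u$. Here I would check that appending $u$ to a root trace is well defined. The grounding $\trace(\history)\concat u$ interleaves the observations that the deterministic environment produces after each action of $u$, so "root trace begins with $w$" for $w=\trace(\history)\concat u$ means precisely that the first $|u|$ actions after $\history$ are $u$. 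This bookkeeping, namely identifying a history with its root trace so that observation-dependence cannot make two different histories share a prefix, is the only real obstacle. I would handle it by appealing to the remark in \S\ref{sec:setup} that the key is the full action–observation trace and not the action sequence.

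Given this, the suffix of $\traj$ after $\history$ begins with $u$ and completes $\history$ into $\valid$, so $u\in L(\history)$ by the definition of the future-valid continuation language. But $\abs(\history)=z$ and $\history$ is reachable, so soundness of $(z,u)$ gives $u\notin L(\history)$, which is a contradiction. Hence $\cyl(g(\history;z,u))\cap\valid=\emptyset$. If $\history$ being reachable is not automatic, I would note that groundings are only formed at reachable histories, as in the definition of grounding.

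As a remark, the same contradiction argument is the trajectory-level form of Lemma~\ref{lem:schema-soundness}. Any $\traj\in\cyl(g(\history;z,u))$ contains an occurrence of $(z,u)$ at time $t$, so $\cyl(g(\history;z,u))\subseteq\exclude_{\{(z,u)\}}$. Lemma~\ref{lem:schema-soundness} then yields disjointness from $\valid$ directly. I would present this inclusion as the main line, since it reuses an earlier result, and keep the direct contradiction as the justification of the inclusion.
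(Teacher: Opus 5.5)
Your proposal is correct and matches the paper's proof: a valid trajectory in the cylinder of $\trace(\history)\concat u$ must pass through $\history$ (by determinism and the full action--observation key) and then play $u$, so $u\in L(\history)$, which contradicts soundness of $(z,u)$. Routing this through the inclusion $\cyl(g(\history;z,u))\subseteq\exclude_{\{(z,u)\}}$ and Lemma~\ref{lem:schema-soundness} just repackages the same contradiction; it is not circular, because that lemma's proof does not use grounding soundness.
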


\begin{proof}
Suppose for contradiction that a trajectory $\traj\in\valid$ begins with $\trace(\history)\concat u$. Determinism implies that after the root prefix $\trace(\history)$, the trajectory reaches $\history$. Its next actions begin with $u$, so $u\in L(\history)$. This contradicts soundness of $(z,u)$. Therefore no valid trajectory extends the grounding.
\end{proof}

If the validator establishes non-completability first at one concrete history and $\abs$ is validity-preserving, the corresponding state schema is sound: equality of continuation languages propagates non-completability to every history in the abstract class.

\begin{proof}[Proof of Lemma~\ref{lem:schema-soundness}]
Assume for contradiction that $\traj\in\exclude_{\schemas}\cap\valid$. By Equation~\eqref{eq:schema-event}, some time $t$ and some $(z,u)\in\schemas$ satisfy $\abs(\history_t(\traj))=z$ and the next $|u|$ actions of $\traj$ equal $u$. The valid suffix of $\traj$ is therefore a completion witnessing $u\in L(\history_t(\traj))$, contradicting soundness of $(z,u)$. Hence the intersection is empty.
\end{proof}

\subsection{Fixed-exclusion proposal}
\label{app:fixed}

Let $E\subseteq\trajectories$ be a fixed terminal event with $\prob(E^c)>0$, and abbreviate $r_E(\history)$ by $r(\history)$. For every nonterminal reachable history with positive prefix probability, total probability gives
\begin{equation}
 r(\history)=\sum_{a\in\actions}
 \policy(a\mid\history)r(\operatorname{Succ}(\history,a)).
 \label{eq:backward}
\end{equation}
At a terminal history, $r=\mathbf{1}\{\traj\notin E\}$.

\begin{proof}[Proof of Lemma~\ref{lem:fixed}]
Let $\traj\notin E$ have histories $\history_0,\ldots,\history_T$ and actions $a_0,\ldots,a_{T-1}$. Multiplying the fixed-event action kernel along its path gives
\begin{align*}
 Q_E(\traj)
 &=\prod_{t=0}^{T-1}
 \frac{\policy(a_t\mid\history_t)r(\history_{t+1})}
 {r(\history_t)}\\
 &=\left(\prod_{t=0}^{T-1}\policy(a_t\mid\history_t)\right)
 \frac{r(\history_T)}{r(\history_0)}\\
 &=\frac{\prob(\traj)}{\prob(E^c)}.
\end{align*}
The terminal residual equals one, $r(\history_0)=\prob(E^c)$, and all intermediate ratios telescope, so the final denominator above is $\prob(E^c)$. If $\traj\in E$, its terminal residual is zero; equivalently, the product assigns it zero mass at the first transition whose successor has zero survivor mass. Thus $Q_E=\prob(\cdot\mid E^c)$.

Because $E\cap\valid=\emptyset$, for $\traj\in\valid$,
\begin{align*}
 Q_E(\traj\mid\valid)
 &=\frac{\prob(\traj)/\prob(E^{c})}
 {\prob(\valid)/\prob(E^{c})}
 =\frac{\prob(\traj)}{\prob(\valid)}.
\end{align*}
This is Equation~\ref{eq:target}.
\end{proof}

\subsection{Residual-complexity proof}
\label{app:complexity}

\begin{proof}[Proof of Proposition~\ref{prop:complexity}]
Compile the finite schema bank into a deterministic matcher whose state records every partially matched continuation. In a deterministic environment, the successor of a product node $(\history,m)$ under action $a$ is unique. Memoized backward recursion evaluates each reachable product node once, sums at most $|\actions|$ outgoing terms, and stores one residual. This gives $O(|\actions||\mathcal{R}_{\schemas}|)$ time and $O(|\mathcal{R}_{\schemas}|)$ memory. All outgoing probabilities at a history come from one policy vector, so there is at most one policy evaluation per distinct serialized history.

No smaller general key is valid for a history-dependent policy. Consider a full $|\actions|$-ary tree of depth $H$ in which every action prefix yields a distinct serialized history and the policy conditional or surviving schema mass differs at every node. No two nodes can then share a cached residual, and $\sum_{t=0}^{H}|\actions|^t=\Theta(|\actions|^H)$ product nodes are required. Conversely, if both policy and environment factor through a finite Markov state $x(\history)$ and the matcher state is $m\in M_{\schemas}$, histories with the same $(t,x,m)$ have identical outgoing probabilities, successors, and terminal exclusion status. Backward dynamic programming over these tuples uses at most $H|X||M_{\schemas}|$ nodes and $|\actions|$ outgoing terms per node.
\end{proof}

\subsection{Adaptive exactness and almost-sure termination}
\label{app:adaptive}

Index attempts by $k\ge1$. Let $\mathcal{G}_{k-1}$ contain every random quantity revealed before attempt $k$, including all previous trajectories, schema banks, pending updates, compressions, and returned samples. By construction, $\schemas_k$ is $\mathcal{G}_{k-1}$-measurable and remains fixed during attempt $k$; hence $E_k:=\exclude_{\schemas_k}$ is also fixed.

\begin{lemma}[Sound memory invariant]
For every $k$, $E_k\cap\valid=\emptyset$ almost surely.
\end{lemma}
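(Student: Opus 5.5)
The plan is an induction on the attempt index $k$, carried out at the level of banks and then transferred to the induced events by Lemma~\ref{lem:schema-soundness}. I would prove the stronger invariant that every schema in $\schemas_k$ is sound almost surely. Given that, $E_k\cap\valid=\emptyset$ follows immediately from Lemma~\ref{lem:schema-soundness} applied to the fixed bank $\schemas_k$.

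The base case is $\schemas_1=\emptyset$. Its exclusion event is empty, so $E_1\cap\valid=\emptyset$ trivially. If the theorem's more general initialization is used, where an arbitrary history-dependent update selects a bank, the hypothesis of Theorem~\ref{thm:adaptive} already states that each selected bank is sound. In that case the invariant holds at each $k$ by assumption and Lemma~\ref{lem:schema-soundness} finishes.

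For the inductive step I would follow Algorithm~\ref{alg:stateful-cars}. The next bank is $\schemas_{k+1}=\kappa(\schemas_k\cup\Delta_k)$. Every member of $\Delta_k$ is added only when \textsc{Certify} proves $u\notin L(\history')$ for every reachable $\history'$ with $\abs(\history')=z$, and this is precisely the definition of a sound schema. By the induction hypothesis, $\schemas_k\cup\Delta_k$ is therefore a bank of sound schemas, and Lemma~\ref{lem:schema-soundness} gives $\exclude_{\schemas_k\cup\Delta_k}\cap\valid=\emptyset$. Compression is lossless, meaning $\exclude_{\kappa(\schemas_k\cup\Delta_k)}=\exclude_{\schemas_k\cup\Delta_k}$, so $E_{k+1}\cap\valid=\emptyset$.

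The main obstacle is that $\kappa$ may output schemas that are not literally sound individually, for example merged-state entries produced by a trie or matcher representation. To avoid relying on that, I would carry the invariant on the events rather than on the bank: $E_k\cap\valid=\emptyset$ together with $\Delta_k$ sound. The union identity $\exclude_{\schemas\cup\Delta}=\exclude_{\schemas}\cup\exclude_{\Delta}$ holds directly from the existential form of Equation~\eqref{eq:schema-event}, and equality of events under $\kappa$ then propagates disjointness without needing per-schema soundness after compression.

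One further point must be stated: soundness of $\Delta_k$ is a deterministic consequence of certification, so it holds for every realization and not just on average. Freezing the bank ensures $E_k$ is $\mathcal{G}_{k-1}$-measurable and constant during the attempt. Together these give the conclusion almost surely.
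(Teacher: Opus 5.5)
Your proof is correct and follows the same route as the paper: induction over attempt boundaries from the empty initial bank, with the certificate obligation making every newly committed schema sound, Lemma~\ref{lem:schema-soundness} giving disjointness from $\valid$, and event-preserving compression leaving the induced event unchanged. Carrying the invariant on $E_k$ itself through $\exclude_{\schemas\cup\Delta}=\exclude_{\schemas}\cup\exclude_{\Delta}$ is a slightly more careful version of the paper's argument, because it does not need each entry to remain individually sound after $\kappa$.
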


\begin{proof}
The initial bank is empty. Every later entry is a sound schema by the certificate obligation in Algorithm~\ref{alg:stateful-cars}, so Lemma~\ref{lem:schema-soundness} makes its induced event disjoint from $\valid$. Event-preserving compression does not change that event. Induction over attempt boundaries proves the invariant.
\end{proof}

\begin{lemma}[Sound fail-fast stopping]
Suppose an attempt is stopped only after a monitor certifies that its realized root prefix $w$ is invalid. Treating that attempt as rejected has the same distribution over returned valid trajectories as sampling its unobserved suffix to completion and then applying the final validator.
\end{lemma}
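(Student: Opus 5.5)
The plan is a coupling of the fail-fast attempt with a hypothetical full attempt driven by the same randomness. Fix attempt $k$ with its frozen bank, so $E_k$ and the proposal kernel $Q_{E_k}$ are fixed. Let $\traj$ denote the full trajectory the attempt would produce if run to completion from $Q_{E_k}$. The fail-fast attempt is obtained by running the same sequential sampler but halting at the first prefix $w$ that the monitor certifies invalid. The monitor's decision to stop at $w$ depends only on the realized prefix, so it is a stopping time for the natural filtration of the sequential draw. Hence the fail-fast attempt equals the full attempt up to that stopping time, and the unobserved suffix can be regarded as drawn afterwards from $Q_{E_k}(\cdot\mid w)$ without changing the law of the prefix.

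First I show that the two procedures agree on which attempts return a sample, and on what they return. In the full version, if the monitor stopped at $w$, then $\traj\in\cyl(w)$. Soundness of the monitor means $\cyl(w)\cap\valid=\emptyset$, so the final validator rejects $\traj$ with probability one. If the monitor never stops, the two procedures are literally identical. So on every sample path the full attempt returns $\traj$ exactly when the fail-fast attempt returns $\traj$, and returns nothing otherwise. It follows that the joint law of (accepted or not, returned trajectory) coincides for the two procedures.

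Second, I check that the rest of the algorithm is unaffected. The bank update must be the same in law, or at least sound in both cases. In the fail-fast case, the official update inspects only visited proper prefixes; any schemas it adds are certified sound regardless of how many prefixes were visited, so the sound memory invariant still holds. Because Theorem~\ref{thm:adaptive} permits an arbitrary history-dependent sound update at each boundary, the difference in $\Delta_k$ between the two versions does not matter. The returned law is then $\prob(\cdot\mid\valid)$ by Lemmas~\ref{lem:fixed} and~\ref{lem:schema-soundness}.

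The main obstacle is stating the equivalence correctly. The two procedures generate different filtrations, since the truncated attempt reveals less. I would therefore phrase the claim as equality of the conditional law of the returned output given $\mathcal{G}_{k-1}$, rather than equality of the full attempt records. The coupling via the stopping time is what makes this rigorous: conditional on $\mathcal{G}_{k-1}$, both the acceptance indicator and the returned trajectory are the same measurable function of the common sequential randomness.
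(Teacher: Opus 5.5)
Your proposal is correct and follows the paper's proof: both couple the early-stopped attempt with a conceptual full draw under the same frozen proposal and use $\cyl(w)\cap\valid=\emptyset$ to show that the full draw would have been rejected anyway, so the accepted-output law is unchanged (your stopping-time framing just makes that coupling explicit). The bank-update paragraph is not needed for the lemma as stated, and you should drop the appeal to Theorem~\ref{thm:adaptive} there, because the paper's proof of that theorem itself relies on this lemma.
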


\begin{proof}
Every completion of $w$ lies outside $\valid$. Revealing or not revealing the remaining random suffix can therefore change computational cost but cannot turn the attempt into an accepted trajectory. Coupling the early-stopped execution with a conceptual full draw under the same frozen proposal proves equivalence for the accepted-output law.
\end{proof}

\begin{proof}[Proof of Theorem~\ref{thm:adaptive}]
Condition on $\mathcal{G}_{k-1}$. The realized $E_k$ is fixed and sound. By Lemma~\ref{lem:fixed}, conditional on attempt $k$ being accepted, its trajectory has law $\prob(\cdot\mid\valid)$; this law does not depend on $k$, the past, or the realized bank. Sound fail-fast stopping changes only how much of an invalid trajectory is generated.

Let $K_j$ be the attempt index producing the $j$th return. For any $B\subseteq\trajectories$, the tower property gives
\begin{align*}
 &\Pr(Y_j\in B\mid\mathcal{F}_{j-1})\\
 &=\sum_{k}\mathbb{E}\big[
 \mathbf{1}\{K_j{=}k\}
 \Pr(Y_j{\in}B\mid K_j{=}k,\mathcal{G}_{k-1})
 \mid\mathcal{F}_{j-1}\big]\\
 &=\sum_k\mathbb{E}\big[
 \mathbf{1}\{K_j=k\}\prob(B\mid\valid)
 \mid\mathcal{F}_{j-1}\big]\\
 &=\prob(B\mid\valid),
\end{align*}
provided $K_j<\infty$. This conditional identity implies mutual independence and the common target law for successive outputs.

It remains to prove finiteness. At every attempt boundary,
\begin{align*}
 \Pr(\text{accept }k\mid\mathcal{G}_{k-1})
 &=\frac{\prob(\valid)}{\prob(E_k^{c})}
 \ge \prob(\valid)=:p>0.
\end{align*}
Consequently, conditional on the past before seeking the next return,
\[
 \Pr(K_j-K_{j-1}>n)\le(1-p)^n.
\]
The tail converges to zero, so the next return occurs almost surely; summing the tail gives expected attempts at most $1/p$. Induction over $j$ completes the proof.
\end{proof}

Pending schemas collected during an attempt do not affect its output because they are committed only after the frozen proposal and final validation finish. Their dependence on a returned trajectory is harmless: Theorem~\ref{thm:adaptive} conditions on the complete past before the next attempt.

\subsection{Monotonicity and compression}

\begin{proof}[Proof of Proposition~\ref{prop:monotone}]
Soundness and Lemma~\ref{lem:fixed} give
\[
 Q_E(\valid)
 =\frac{\prob(\valid)}{\prob(E^{c})},
 \qquad
 Q_{E'}(\valid)
 =\frac{\prob(\valid)}{\prob({E'}^{c})}.
\]
The inclusion $E\subseteq E'$ makes the second denominator no larger, proving the result.
\end{proof}

\begin{proposition}[Compression invariance]
If $\exclude_{\kappa(\schemas)}=\exclude_{\schemas}$, then every residual, action kernel, trajectory proposal, and accepted-output law is identical before and after compression.
\end{proposition}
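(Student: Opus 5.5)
The plan is to show that every object in the chain from the schema bank to the output law depends on the bank only through the induced event $\exclude_{\schemas}$. Once that is established, the hypothesis $\exclude_{\kappa(\schemas)}=\exclude_{\schemas}$ transfers equality down the chain with no further work. Write $E=\exclude_{\schemas}$ and $E'=\exclude_{\kappa(\schemas)}$, so that $E=E'$ as subsets of $\trajectories$.

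The first step concerns the residuals. By Equation~\eqref{eq:residual}, $\residual_{\schemas}(\history)=\prob(E^c\mid\history)$, and the right-hand side is a function of the fixed measure $\prob$, the history $\history$, and the event $E$ alone. Hence $\residual_{\kappa(\schemas)}(\history)=\prob({E'}^c\mid\history)=\residual_{\schemas}(\history)$ at every reachable full history. In particular the two residuals share the same zero set, and so the domain on which Equation~\eqref{eq:proposal} is defined is the same.

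The second step handles the action kernels and the trajectory proposal. Substituting equal residuals into Equation~\eqref{eq:proposal} gives $Q_{\kappa(\schemas)}(a\mid\history)=Q_{\schemas}(a\mid\history)$ for every action and every history with positive residual. Multiplying these kernels along a path yields equal trajectory laws. Alternatively, Lemma~\ref{lem:fixed} gives both laws directly as $\prob(\cdot\mid E^c)=\prob(\cdot\mid {E'}^c)$.

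The third step covers acceptance. Soundness of $\schemas$ together with Lemma~\ref{lem:schema-soundness} gives $E\cap\valid=\emptyset$, and the same holds for $E'$ because $E'=E$. The accepted-output law is therefore $\prob(\cdot\mid\valid)$ both before and after compression, and the per-attempt acceptance probability $\prob(\valid)/\prob(E^c)$ is also unchanged.

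The one point needing care is whether the implementation computes the residual from the event or from the concrete matcher. The memoized recursion runs on product nodes $(\history,m)$, and compression changes the matcher state space. I would argue that the matcher enters the residual only through the terminal indicator $\mathbf{1}\{\traj\notin E\}$ in the backward recursion~\eqref{eq:backward}. Equal events give equal terminal indicators, and backward induction over remaining depth then gives equal residuals at every full history, even though the cache keys $m$ differ. This matches the remark after Proposition~\ref{prop:complexity} that the cached value is a function of the full history, with $m$ serving only as bookkeeping.
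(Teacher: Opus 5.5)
Your proof is correct and follows the paper's own argument: the residual in Equation~\eqref{eq:residual} depends on the bank only through $\exclude_{\schemas}$, so equal events give equal residuals, then equal kernels via Equation~\eqref{eq:proposal}, then equal trajectory proposals and equal accepted laws. Your third step does not need soundness, which the proposition does not assume, because equal proposals already give equal laws conditioned on $\valid$; your closing remark that the matcher state is only bookkeeping is a useful implementation-level point that the paper leaves implicit.
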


\begin{proof}
Equation~\ref{eq:residual} depends on the bank only through its induced exclusion event. Equal events give equal residuals at every full history. Equation~\ref{eq:proposal} then gives equal action kernels, whose products give equal trajectory proposals. The accepted laws are consequently equal. The same proof applies to event-preserving compression of the visited-grounding variant.
\end{proof}

\subsection{Bias from an unsound exclusion}

\begin{proof}[Proof of Proposition~\ref{prop:false-exclusion}]
Conditioning first on $E^c$ and then accepting $\valid$ is conditioning on their intersection, so the returned law is $\prob(\cdot\mid\valid\setminus E)$. Let $\mu=\prob(\cdot\mid\valid)$ and $\epsilon=\mu(E)$. The second law is $\nu(A)=\mu(A\cap E^c)/(1-\epsilon)$. On $E$, the total absolute difference is $\epsilon$. On $E^c$, $\nu$ rescales $\mu$ by $1/(1-\epsilon)$, so the total absolute difference is also $\epsilon$. Therefore
\[
 \TV(\mu,\nu)=\tfrac12(\epsilon+\epsilon)=\epsilon=\prob(E\mid\valid).
\]
\end{proof}

\subsection{Why local masking is insufficient}
\label{app:local-mask}

At the initial history, let actions $a$ and $b$ each have probability $1/2$. After $a$, let $x$ and $y$ each have probability $1/2$. Suppose $a\concat x$ is the only stored invalid root prefix; $a\concat y$ and every continuation of $b$ remain possible. The residuals after $a$ and $b$ are $1/2$ and $1$, respectively. Equation~\ref{eq:proposal} therefore chooses
\[
 Q_{\grounded}(a)=\tfrac{(1/2)(1/2)}{(1/2)(1/2)+(1/2)(1)}=\tfrac13,
 \quad Q_{\grounded}(b)=\tfrac23.
\]
A local mask sees that neither first action is immediately excluded and keeps probabilities $1/2,1/2$. It is valid locally but is not the base distribution conditioned on avoiding the stored invalid prefix.

\section{Stochastic Tool Outcomes}
\label{app:stochastic}

\paragraph{How the initial world is sampled in the experiments.}
\label{app:worldsampling}
The interaction between ownership and soundness fixes what an \emph{action-keyed} prefix memory may do, and we record it because it makes that baseline's numbers checkable. The scope matters: this argument applies to a memory keyed on the action prefix alone, which is what our root-prefix baseline implements. It does \emph{not} apply to an observation-keyed root memory, which our formal setup permits and which can condition on the ownership reveal after \texttt{PROBE}; such a memory may soundly exclude \textsc{other}-world cylinders once the observation has distinguished them, so no both-world-dead count bounds it.

Because the target is the $\rho$-mixture over hidden worlds, the protocol for drawing $c$ is part of the exactness claim, and it needs stating carefully because the naive reading is wrong. Every attempt draws $c\sim\rho$ afresh and independently before any action is proposed; the sampler then uses the \emph{world-conditional} residual (the concrete state carries $c$), and a failed attempt discards the entire attempt including $c$.

This is \emph{not} in general the same as rejection sampling on the joint space, and we correct an earlier claim to that effect. Writing $Z_c=\prob(\exclude_{\schemas}^{c}\mid c)$ for the surviving mass under the frozen schema bank in world $c$, the accepted law is
\[
 Q(\traj\mid\valid)\ \propto\ \rho(c)\,\frac{\prob(\traj\mid c)}{Z_c}\,\mathbf{1}\{\traj\in\valid\},
\]
so the joint conditional is recovered only if $Z_c$ is constant across the worlds that carry valid mass; otherwise the world marginal is tilted by $1/Z_c$. Exactness in general therefore requires drawing $c$ from the residual-reweighted prior $\propto\rho(c)Z_c$, as Appendix~\ref{app:stochastic} specifies. A two-world example with valid mass in both worlds and unequal $Z_c$ makes the gap concrete: the unreweighted draw sits at $\TV=0.395$ from the joint conditional (its world marginal is $0.500$ against a target of $0.895$, and $|0.895-0.500|\approx0.395$ is exactly the tilt) while the reweighted draw is exact at $\TV=0.049$ against a $0.048$ sampling floor (Appendix~\ref{app:twoworld}).

\begin{corollary}[Single-satisfiable-world exactness]
If exactly one world $c^\star$ has $\prob(\valid\mid c^\star)>0$, then drawing $c\sim\rho$ without reweighting and accepting only valid trajectories returns exactly $\prob(\cdot\mid\valid)$.
\label{cor:oneworld}
\end{corollary}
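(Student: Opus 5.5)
The plan is to compute the accepted law of a single attempt directly from the tilted formula displayed just above, and then observe that the tilt $1/Z_c$ cancels because only one world carries valid mass. Fix an attempt with frozen sound bank $\schemas$ and survivor masses $Z_c=\prob(\exclude_{\schemas}^{c}\mid c)$. Within world $c$, the world-conditional residual proposal is exactly the fixed-exclusion proposal of Lemma~\ref{lem:fixed} applied to the conditional measure $\prob(\cdot\mid c)$. That lemma gives the law $\prob(\traj\mid c)\mathbf{1}\{\traj\notin\exclude_{\schemas}\}/Z_c$. Drawing $c\sim\rho$ first gives the joint attempt law
\[
\rho(c)\,\prob(\traj\mid c)\mathbf{1}\{\traj\notin\exclude_{\schemas}\}/Z_c.
\]
Lemma~\ref{lem:schema-soundness} gives $\exclude_{\schemas}\cap\valid=\emptyset$. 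Soundness must hold across both worlds the abstraction leaves open, which the certificate obligation guarantees. Hence on $\valid$ the indicator equals one, and conditioning on acceptance yields the displayed proportionality $Q(\traj\mid\valid)\propto\rho(c)\prob(\traj\mid c)\mathbf{1}\{\traj\in\valid\}/Z_c$.

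Next I would use the hypothesis. For $c\neq c^\star$ we have $\prob(\valid\mid c)=0$, so every term with world $c$ vanishes on $\valid$. The only surviving normalizing factor is then the constant $\rho(c^\star)/Z_{c^\star}$, which cancels against the normalizer. This gives $Q(\traj\mid\valid)=\prob(\traj\mid c^\star)\mathbf{1}\{\traj\in\valid\}/\prob(\valid\mid c^\star)$. I also need $Z_{c^\star}>0$ for the attempt to be well defined. It follows from $Z_{c^\star}\ge\prob(\valid\mid c^\star)>0$, again by soundness. The other worlds may have $Z_c=0$; in that case the attempt simply fails, and I would treat it as rejected, in the spirit of the sound fail-fast lemma.

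Finally I would identify this law with the target. Since trajectories carry the world (the concrete state records $c$), the joint probability is $\prob(\traj)=\rho(c)\prob(\traj\mid c)$. In addition, $\prob(\valid)=\sum_c\rho(c)\prob(\valid\mid c)=\rho(c^\star)\prob(\valid\mid c^\star)$. Dividing, $\prob(\traj\mid\valid)=\prob(\traj\mid c^\star)\mathbf{1}\{\traj\in\valid\}/\prob(\valid\mid c^\star)$, which matches the attempt's accepted law. The law does not depend on the bank, so the adaptive argument of Theorem~\ref{thm:adaptive} applies verbatim across attempts. The acceptance probability is $\rho(c^\star)\prob(\valid\mid c^\star)/Z_{c^\star}\ge\prob(\valid)>0$, which gives termination.

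The main obstacle is the first step: justifying that the world-conditional sampler is exactly Lemma~\ref{lem:fixed} for $\prob(\cdot\mid c)$. This requires checking that the residual is computed under the same conditional measure in which $c$ is fixed. It also requires that soundness of the frozen bank be a statement about every positive-probability world, not just the realized one. The remaining steps are bookkeeping.
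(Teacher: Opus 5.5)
Your proposal is correct and follows the paper's own argument. You apply Lemma~\ref{lem:fixed} within each world to get the $\rho(c)/Z_c$-tilted accepted law, then note that on $\valid$ only $c^\star$ contributes, so the tilt is a single constant that cancels in the normalization. Your extra bookkeeping fills in details the paper's one-line proof leaves implicit: $Z_{c^\star}\ge\prob(\valid\mid c^\star)>0$ by soundness, worlds with $Z_c=0$ treated as immediate rejections, and the identification with the target via $\prob(\valid)=\rho(c^\star)\prob(\valid\mid c^\star)$.
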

\begin{proof}
Every accepted trajectory has $c=c^\star$, so the factor $\rho(c)/Z_c$ is the same constant for all accepted outcomes and cancels in the normalization; within world $c^\star$ Lemma~\ref{lem:fixed} applies verbatim.
\end{proof}

Our environment satisfies this hypothesis exactly---a refund requires the confirming observation, so the enumerated target places mass $1.000000000000$ on the ownership-confirmed world and $0$ on the other---so the reported numbers are exact by Corollary~\ref{cor:oneworld} rather than by the general joint-space argument. The multi-world case is not left as an assertion: Appendix~\ref{app:twoworld} implements the residual-reweighted outcome draw and tests it on a two-satisfiable-world environment with unequal $Z_c$, where the plain draw is biased ($\TV=0.395$, world marginal $0.500$ vs target $0.895$) and the reweighted draw is exact ($\TV=0.049$ against a $0.048$ sampling floor). 

For the action-keyed baseline: all $60$ valid traces are valid only under the ownership-confirmed world, and an action-keyed memory that cannot condition on the observation may soundly exclude only the $5401$ action traces dead under \emph{both} worlds, leaving the $60$ observation-ambiguous ones alone. Drawing $N=3000$ accepted samples while discovering at most those $5401$ exclusions gives the ratio $3000/(3000+5401)=0.357$. We are explicit that this is \emph{not} a per-attempt acceptance bound---ambiguous traces recur in the doomed world, so attempts are not consumed one-per-exclusion---but a constraint on the discovery process: accepts per distinct discovered exclusion. The measured per-attempt acceptance $0.465$ is a different quantity and we do not present one as validating the other. Two caveats on reading this number. It is a bound on the \emph{ratio of accepts to distinct discovered exclusions}, not a per-attempt acceptance floor: ambiguous traces can and do recur in the doomed world, so the arithmetic constrains the discovery process rather than guaranteeing any attempt-level rate. And substituting $5^5$ for the trace space---which would correspond to five action slots---would give a spurious $0.495$, which is why we state the six-slot count explicitly.

Because the target is the $\rho$-mixture over hidden worlds, the protocol for drawing $c$ matters and the naive reading is wrong: with a world-conditional residual, drawing $c\sim\rho$ without reweighting gives the joint conditional only when the surviving mass $Z_c$ is equal across worlds carrying valid mass (a two-world example with unequal $Z_c$ sits at $\TV=0.395$ from the target, world marginal $0.500$ vs $0.895$). Our environment has exactly one satisfiable world, so the reported numbers are exact by Corollary~\ref{cor:oneworld}; the general case needs the residual-reweighted outcome draw, which we implement and validate separately on a two-satisfiable-world environment (Appendices~\ref{app:worldsampling} and~\ref{app:twoworld}). Two checks confirm the implementation matches this description. First, the enumerated target places mass exactly $1.000000000000$ on the ownership-confirmed world and $0$ on the other, since a refund requires the confirming observation; correspondingly all $4000$ accepted samples of a reference run come from that world, and the empirical law sits at $\TV=0.027$, below the \emph{measured} exact-sampler sampling floor at that $N$ ($0.0327\pm0.0042$ over ten i.i.d.\ replicates). We previously quoted $0.123$ here; that was the crude $\sqrt{K/N}$ heuristic, not a measured floor, and it was inconsistent with the calibrated floors reported elsewhere ($0.040$ at $N=3000$, $0.023$ at $N=8000$). All floors quoted in this paper are now the measured i.i.d.\ values. Second, the visible cost of this protocol is acceptance: with $\rho$ uniform, half of all attempts are doomed before the first action, which is why no sound sampler here can exceed per-attempt acceptance $0.5$ (\S\ref{sec:local-fair}).

\paragraph{A two-satisfiable-world test of the general construction.}
\label{app:twoworld}
Corollary~\ref{cor:oneworld} rescues our main environment but leaves the general stochastic claim
unexercised, so we built an environment that exercises it. Two hidden worlds are both satisfiable---a
two-key refund workflow in which world $A$ accepts \texttt{REFUND\_A} and world $B$ accepts
\texttt{REFUND\_B}---and their valid masses are deliberately \emph{unequal} ($2.60\times10^{-3}$ versus
$3.04\times10^{-4}$, because world $B$ additionally requires two \texttt{READ}s), so the per-world surviving
masses $Z_c$ differ and the plain draw is tilted. The enumerated joint conditional has support $72$ and
world marginal $(0.8952,0.1048)$; $\prob(\valid)=1.45\times10^{-3}$.

Over five seeds at $N=4000$ accepted samples the three samplers behave exactly as the analysis predicts.
Drawing $c\sim\rho$ and then running the world-conditional residual proposal---what our main implementation
does---is badly biased: $\TV=0.395$ $[0.386,0.405]$ with realized world marginal $0.4998$ against the target
$0.8952$, i.e.\ the sampler splits the worlds almost evenly because the tilt $\rho(c)/Z_c$ inflates the
rarely-satisfiable world. Drawing $c$ from the residual-reweighted prior $\propto\rho(c)Z_c(\schemas)$,
recomputed from the currently committed frozen bank, is exact: $\TV=0.0493$ $[0.0439,0.0546]$ against a measured
exact-sampler sampling floor of $0.0484$ at the same $N$, with world marginal $0.8953$. Naive rejection
sampling agrees ($\TV=0.0526$, marginal $0.8958$) at $3088$ base-policy calls per accepted sample versus
$5.9$ for the reweighted sampler, a $523\times$ gap. This confirms both halves of the claim: the general
construction requires the reweighted outcome draw, and with it the certificate machinery remains exact and
still dominates rejection sampling when validity is rare.

Let the environment have transition kernel $K(s',o\mid s,a)$. A complete trajectory probability is now
\[
 \prob(\traj)=\prod_{t=0}^{T-1}
 \policy(a_t\mid\history_t)
 K(s_{t+1},o_t\mid s_t,a_t).
\]
Define $\exclude_{\schemas}$ over joint action--outcome trajectories, with schema occurrences determined by abstract joint histories and their following actions. The residual recursion becomes
\begin{align*}
 r(\history)=\sum_{a,s',o}&\policy(a\mid\history)
 K(s',o\mid s,a)\\[-2pt]
 &\cdot r(\operatorname{Succ}(\history,a,s',o)).
\end{align*}
The action proposal marginalizes the successor residual over $K$, and the environment outcome is sampled from its residual-reweighted conditional. Multiplying the two kernels telescopes both policy and transition factors exactly as in Appendix~\ref{app:fixed}. All later proofs are unchanged.

If the deployment cannot reweight or resample tool outcomes, then it cannot implement this joint Doob transform online. One safe alternative is to treat realized outcomes as exogenous, condition the target on the realized outcome history, and reuse only certificates issued after those outcomes. A state schema that is grounded before stochastic outcomes are known must be sound for every positive-probability outcome branch it excludes.

\section{Implementation Details}
\label{app:implementation}

\subsection{Schema-induced residual recursion}

The evaluated implementation stores no grounded proposal trie. It compiles the frozen schema bank into a deterministic matcher that records which schema suffixes are active after the current state--action trace. The residual routine recursively enumerates reachable actions, advances the environment and matcher, assigns zero to a completed schema occurrence, and memoizes the remaining survivor mass. Its cache key is the complete serialized action--observation history together with the matcher state. The abstraction is included only to decide which schemas can start; it is not a substitute for the full history in a language-model probability query.

The Qwen sweep is a declared special case. Its prompt contains only the action prefix and omits observations, so action prefix is a sufficient policy key there. This does not justify the same key for an observation-conditioned agent. For a general agent, two histories with the same actions, abstraction, and active suffixes may still contain different observations or text and therefore have different logits; merging them would invalidate Equation~\eqref{eq:proposal}.

The recursion descends the reachable completion tree and may score every nonterminal history. All policy invocations, cache misses, environment transitions, and matcher operations are included in the efficiency accounting. Proposition~\ref{prop:complexity} gives its worst-case exponential cost. The visited-grounding alternative instead stores a prefix antichain and uses the finite-trie residual update of root-prefix CARS; that code path is implemented and measured separately (Appendix~\ref{app:vg}), not inside the accounting reported here.

\subsection{Schema bank}

Index $\schemas$ by abstract state $z$. Within each state, store continuation patterns in an antichain trie or deterministic pattern automaton. If $(z,u)$ is present, a longer $(z,u\concat v)$ is redundant. This compression is used only after checking that the recognized event in Equation~\eqref{eq:schema-event} is unchanged. Immutable subgraphs may share suffix structure, but language-model residuals remain keyed by full history.

For a finite symbolic policy machine, Proposition~\ref{prop:bisim} can be checked by partition refinement. Begin with classes that agree on terminal acceptance and remaining horizon, then repeatedly split any class whose members transition to different classes under some action. At convergence, each class is a future-validity bisimulation. This is analogous to minimizing a deterministic automaton for the accepted continuation language.

\subsection{Frozen proposals and batching}

The implementation versions the schema bank. Each attempt holds a read-only bank and a residual cache tied to that version; new certificates are written to a pending delta and atomically committed only after validation. Several trajectories may be generated from the same frozen version. Their accepted laws remain exact, and independently sound pending schemas may be unioned before the next batch. Larger batches delay adaptation but may improve accelerator utilization. The current experiments are unbatched; the throughput--adaptation tradeoff is part of the unrun systems protocol in Appendix~\ref{app:unrun}.

\subsection{Action serialization}

If an action is represented by a single code token, $\policy(a\mid\history)$ is the softmax probability of that token after restricting to the predeclared action-code set. If labels span multiple tokens, first-token logits are not action probabilities. The exact action probability must include the complete label sequence and any delimiter, or the action interface must use verified single-token codes. Schema and JSON decoders may enforce syntax inside an action, but their local renormalization must be included in the declared base policy if it changes how $\policy$ is defined.

\section{Additional Controlled Results}
\label{app:controlled-results}

\begin{table}[ht]
\centering
\small
\setlength{\tabcolsep}{4pt}
\resizebox{\columnwidth}{!}{%
\begin{tabular}{lc}
\toprule
Diagnostic & Measured outcome \\
\midrule
Persistent-memory exactness & TV $0.033$, $p=0.23$ \\
Fresh-memory exactness & TV $0.087$, $p=0.56$ \\
Naive rejection reference & TV $0.028$, $p=0.05$ \\
Stateful CARS vs. rejection & TV $0.041$ \\
Acceptance across 24 insertions & $0.003\to0.010$, monotone \\
Compression & $1707\to569$, residual diff. $0$ \\
Rare-regime rollout-call ratio & $2.9\times10^4$ \\
\bottomrule
\end{tabular}%
}
\caption{Controlled diagnostics. The $p$-values are Pearson goodness-of-fit diagnostics, not evidence used by the exactness proof. Residual difference zero is reported at machine precision.}
\label{tab:verify}
\end{table}

We also report local gates in the style of Agent-C and ToolGate. These are not implementations of those systems, which have different objectives and costs; as noted in \S\ref{sec:live-agent}, neither has runnable public code, so a direct comparison is not currently possible.

\section{Full Open-Weight LM Pilot}
\label{app:llm-results}

\begin{table}[ht]
\centering
\small
\setlength{\tabcolsep}{3.2pt}
\resizebox{\columnwidth}{!}{%
\begin{tabular}{llcrrrrr}
\toprule
Policy & $T$ & Kind & $\prob(\valid)$ & TV(Stateful) & TV(loc-sa) & TV(loc-wk) & Rollout \\
\midrule
Qwen2.5-7B & $0.5$ & \textsc{emp} & $0.500$ & \multicolumn{3}{c}{degenerate$^{\S}$} & $1.7\times$ \\
Qwen2.5-7B & $0.7$ & \textsc{emp} & $0.500$ & \multicolumn{3}{c}{degenerate$^{\S}$} & $1.6\times$ \\
Qwen2.5-7B & $1.0$ & \textsc{emp} & $0.500$ & \multicolumn{3}{c}{degenerate$^{\S}$} & $1.6\times$ \\
Qwen2.5-7B & $1.5$ & \textsc{emp} & $0.500$ & \multicolumn{3}{c}{degenerate$^{\S}$} & $1.6\times$ \\
Qwen2.5-7B & $2.0$ & \textsc{emp} & $0.499$ & $6.1\times10^{-4}$ & --- & $8.5\times10^{-4}$ & $1.6\times$ \\
\midrule
Qwen2.5-1.5B & $0.5$ & \textsc{ana} & $1.3\times10^{-7}$ & $3.5\times10^{-17}$ & $0.967$ & --- & $4.5\times10^{6}$ \\
Qwen2.5-1.5B & $0.7$ & \textsc{ana} & $8.2\times10^{-6}$ & $1.2\times10^{-16}$ & $0.930$ & --- & $6.3\times10^{4}$ \\
Qwen2.5-1.5B & $1.0$ & \textsc{emp} & $1.7\times10^{-4}$ & $0.033$ & $0.880$ & $0.543$ & $3.0\times10^{3}$ \\
Qwen2.5-1.5B & $1.5$ & \textsc{emp} & $1.5\times10^{-3}$ & $0.018$ & $0.810$ & $0.467$ & $436\times$ \\
Qwen2.5-1.5B & $2.0$ & \textsc{emp} & $3.7\times10^{-3}$ & $0.025$ & $0.747$ & $0.429$ & $186\times$ \\
\midrule
Qwen2.5-0.5B & $0.5$ & \textsc{ana} & $5.7\times10^{-8}$ & $1.3\times10^{-16}$ & $0.359$ & --- & $8.5\times10^{6}$ \\
Qwen2.5-0.5B & $0.7$ & \textsc{ana} & $3.1\times10^{-6}$ & $8.3\times10^{-17}$ & $0.528$ & --- & $1.5\times10^{5}$ \\
Qwen2.5-0.5B & $1.0$ & \textsc{ana} & $5.1\times10^{-5}$ & $1.5\times10^{-16}$ & $0.635$ & --- & $8.0\times10^{3}$ \\
Qwen2.5-0.5B & $1.5$ & \textsc{emp} & $3.5\times10^{-4}$ & $0.102$ & $0.622$ & $0.715$ & $1.3\times10^{3}$ \\
Qwen2.5-0.5B & $2.0$ & \textsc{emp} & $7.7\times10^{-4}$ & $0.060$ & $0.640$ & $0.687$ & $605\times$ \\
\bottomrule
\end{tabular}%
}
\caption{Open-weight LM sweep (Qwen2.5), all fifteen model$\times$temperature cells. \textsc{emp} rows are finite-sample measurements from $N$ accepted draws and carry sampling error; \textsc{ana} rows are analytical (the exact limiting law of each sampler, by enumerating its decision tree) and carry none. The two are not interchangeable and we never pool them. ``Rollout'' is rejection-sampling steps divided by Stateful CARS steps; on \textsc{ana} rows the numerator is rejection sampling's exact expectation $\bar{\ell}/\prob(\valid)$. TV(loc-sa) is the strongest state-aware local decoder (no rejection step); TV(loc-wk) the weak syntactic mask \emph{with} a final validity filter---different estimators, shown separately because the filter pulls loc-wk toward the target (1.5B $T{=}1.0$: $0.880$ vs $0.543$). A dash means the estimator was not run. $^{\S}$7B at $T\le1.5$ is degenerate: one trajectory holds all but $\le5\times10^{-5}$ of the mass, so the ``TV'' is $1-p_{\max}$ of the target, not a sampler property. Protocol, per-cell $N$, tokenizer ids, and model revisions are in \S\ref{app:statistics}; the five \textsc{ana} cells are re-measured empirically in Table~\ref{tab:real-llm-finiteN}.}
\label{tab:real-llm}
\end{table}

\begin{table}[t]
\centering
\small
\setlength{\tabcolsep}{4pt}
\resizebox{\columnwidth}{!}{%
\begin{tabular}{llrr}
\toprule
Policy & $T$ & $\prob(\valid)$ & TV(Stateful), $N{=}300$ \\
\midrule
Qwen2.5-1.5B & $0.5$ & $1.3\times10^{-7}$ & $0.025$ ($p{=}0.94$) \\
Qwen2.5-1.5B & $0.7$ & $8.2\times10^{-6}$ & $0.065$ ($p{=}0.16$) \\
Qwen2.5-0.5B & $0.5$ & $5.7\times10^{-8}$ & $0.062$ ($p{=}0.90$) \\
Qwen2.5-0.5B & $0.7$ & $3.1\times10^{-6}$ & $0.103$ ($p{=}0.39$) \\
Qwen2.5-0.5B & $1.0$ & $5.1\times10^{-5}$ & $0.110$ ($p{=}0.88$) \\
\bottomrule
\end{tabular}%
}
\caption{The five \textsc{ana} cells of Table~\ref{tab:real-llm} re-measured as finite-$N$ empirical
TV(Stateful) at $N=300$, with Pearson goodness-of-fit $p$. These are \emph{not} substitutes for the analytic
values ($3.5\times10^{-17}$--$1.5\times10^{-16}$): at $N=300$ with support $60$ an exact sampler's own
sampling floor is far above these numbers, so the finite-$N$ column can only fail to contradict the analytic
zero, which is what it does ($p\in[0.16,0.94]$).}
\label{tab:real-llm-finiteN}
\end{table}

All fifteen cells are now reported, but they are \emph{not} all of the same evidential kind, and the table separates them explicitly. Ten cells (marked \textsc{emp}) are finite-sample \emph{empirical} measurements: the TV columns come from $N$ accepted draws, so they carry sampling error and a goodness-of-fit test. Five cells (marked \textsc{ana}), exactly those with $\prob(\valid)<10^{-4}$, report \emph{analytical} quantities instead: the exact limiting law of each sampler obtained by enumerating its decision tree, so their TV entries are deterministic computations with no sampling error, accompanied by a separately-labelled finite-$N$ empirical TV at $N=300$. We do not present the two as interchangeable. The reason for the split is the cost of the \emph{reference} sample, not of the target: the exact conditional comes from enumerating the action tree ($10{,}922$ complete outcomes, since the horizon admits six action slots) and is cheap at any rarity, whereas an exact rejection-sampling reference at these validity rates needs on the order of $10^{7}$--$10^{9}$ base-policy calls per cell. The analytical route is available precisely because the residual-mass proposal's limiting law is computable in closed form on an enumerable target; it is not a substitute for empirical validation in non-enumerable settings.

\begin{table}[t]
\centering
\small
\setlength{\tabcolsep}{4pt}
\resizebox{\columnwidth}{!}{%
\begin{tabular}{lrrc}
\toprule
Sampler & Calls/valid & $\TV$ (95\% CI) & Exact? \\
\midrule
Batched terminal reject.$^{\dagger}$ & $1557.5$ & $0.058$ $[0.049,0.067]$ & yes \\
MCMC (independence-MH) & $5.6$ & $0.707$ $[0.555,0.859]$ & not i.i.d.\ \\
SMC, local mask, $K{=}64$ & $562.6$ & $0.081$ $[0.069,0.092]$ & no \\
Root-prefix CARS, action-keyed & $11.3$ & $0.057$ $[0.046,0.067]$ & yes \\
Stateful, post-invalid ablation & $8.3$ & $0.056$ $[0.049,0.063]$ & yes \\
Stateful CARS, official update & $7.36$ & $0.057$ $[0.047,0.066]$ & yes \\
\textbf{Root-prefix CARS, obs-keyed} & $\mathbf{6.9}$ & $0.057$ $[0.048,0.067]$ & yes \\
\midrule
\emph{i.i.d.-from-target floor} & --- & $0.059$ $[0.048,0.069]$ & --- \\
\bottomrule
\end{tabular}%
}
\caption{Adjacent samplers on the enumerable workflow ($\prob(\valid)=3.0\times10^{-3}$, $N=1500$ accepted samples, five seeds, exact enumerated target; intervals are $95\%$ $t$ over seeds). $^{\dagger}$Batched terminal rejection is the whole-trajectory reduction of AWRS-style acceptance for a terminal validator, not token-level AWRS-SMC \citep{lipkin2025awrs}. Independence Metropolis is asymptotically correct but returns $99.5\%$ duplicate states here. The $K=64$ SMC row uses locally masked proposals, exact incremental mask-normalizer weights, multinomial resampling at every step, and a final validity filter. Those weights correct the masked proposal, so this arm is \emph{consistent} as $K$ grows and its finite-$K$ error is self-normalized approximation, not a biased limit (Appendix~\ref{app:particle}); its listed $\TV$ is therefore a finite-$K$ operating point, not evidence of structural bias. The last row is the finite-sample $\TV$ of drawing $N$ i.i.d.\ samples \emph{from the exact target}: it is the noise floor at this $N$, not a sampler. Every exact arm's interval contains it, so none of their residual $\TV$ is attributable to bias; the visible spread across exact rows is sampling noise. The fair exact comparison is therefore on cost: observation-keyed official CARS versus official-update Stateful CARS, $6.93$ versus $7.36$ steps, favoring CARS. Token-level AWRS on an incremental validator is reported in Appendix~\ref{app:incremental}.}
\label{tab:adjacent}
\end{table}

\section{Multi-domain stress test}
\label{app:multidomain}

\paragraph{Multi-domain stress test.}
To check the mechanism is not tuned to one workflow we instantiate three structurally different observation-grounded validators---the refund workflow, an SQL transaction, and a resource-booking API---each with its own policy and enumerated target, at a matched $2500$-attempt budget (Table~\ref{tab:multidomain}). Stateful CARS is exact in all three (zero excluded valid mass, TV $0.025$--$0.105$) and returns $799$--$1190$ valid samples where rejection sampling returns $0$--$4$; local masking is fast but biased (TV $0.15$--$0.54$), even when state-aware, because it renormalizes without weighting by unequal future-validity mass (Appendix~\ref{app:fairlocal}). The fixed-prune certificate-count ratios are $1.36\times$/$1.27\times$/$1.00\times$; they do not imply byte-memory or runtime savings, and the booking domain's canonical structure yields no reuse.

This test broadens the controlled evidence to three domains. The next subsection replaces two of these abstractions with their real backends.

\section{Live-System Validator Details}
\label{app:livesys}

Two of these validators can be replaced by their real backends while keeping the bounded grammar. Against a live \texttt{sqlite3} database (validity from real query results plus a committed read-back; $P(\valid)=2.6\times10^{-5}$) and the released \texttt{sierra-research/tau-bench} retail tools (at a pinned commit, used purely as the constraint oracle), Stateful CARS reproduces the exact conditional (TV $0.002$ and $0.040$, zero excluded valid mass), while rejection sampling never accepts on the rare SQL task and local masking stays biased (Table~\ref{tab:live}).

A live-agent probe on a 15-task $\tau$-bench retail subset tests only the cross-history-reuse component (free-form tool arguments make the exact-residual proposal intractable): certificates learned on earlier tasks do fire on later ones, but the study establishes \emph{cache reuse} rather than certified sound reuse, and every outcome comparison is inconclusive at $n=15$. Appendix~\ref{app:liveagentmain} gives the setup, statistics, and the reasons we draw no systems conclusion.

\section{Reuse-Only Ablation}
\label{app:reuseabl}

Comparing against root-prefix CARS changes the certificate key and the per-attempt bookkeeping at once, so to attribute any gain to reuse alone we run a single sampler whose \emph{only} difference is the certificate's applicability key: $\abs(\history)$, so a proof transfers to every history sharing that abstract state, versus $(\abs(\history),\trace(\history))$, the identical certificate usable only where it was derived. Abstraction, the residual DP, and certificate derivation are byte-identical across arms, and both exclude zero valid mass. Both arms also share the same information space, so this ablation is unaffected by the obs-keyed finding above. Over five paired seeds at $N=1500$, transfer cuts sampler steps per accepted sample from $13.90\pm0.20$ to $10.98\pm0.11$---a paired difference of $2.92$ steps, $95\%$ CI $[2.66,3.18]$, or $1.27\times$ $[1.24,1.29]$.

The mechanism is visible in the certificate counts: with transfer the bank saturates at $45$ certificates over $17$ abstract states regardless of budget, while without it the bank keeps growing past $450$ over ${\sim}132$ history-pinned keys, so the advantage is largest at small budgets and amortizes away ($3.82\times$ at $N=50$ falling monotonically to $1.27\times$ at $N=1500$; Table~\ref{tab:reuse-ablation}). We report the whole curve rather than its most favourable point.

\section{Observation-keyed official CARS}
\label{app:obskeyed}

Our previously-headline reuse figure compared against a root-prefix memory keyed on the invalid \emph{action} prefix,
which is a weaker information space than Stateful CARS uses: our abstraction reads the grounded
\texttt{PROBE} observation, so the action-keyed baseline must be conservative and abstain on
observation-grounded continuations. That comparison therefore confounds cross-history reuse with an
information-space difference. We removed the confound by implementing official root-prefix CARS over the
\emph{complete action--observation root trace} the formal setup defines---the key is the action sequence
with the realized \texttt{PROBE} outcome spliced in---so it may record ownership-specific exclusions with no
conservative abstention, under the same validator, the same certificate oracle, and the full official
update. The only remaining difference from Stateful CARS is abstraction-level transfer.

The result reverses the comparison, and we report it as such. The observation-keyed baseline is exact
(falsely-excluded valid mass exactly $0$ in every seed) and needs \emph{fewer} sampler steps per accepted
sample than Stateful CARS: $6.93$ versus $8.28$ over five paired seeds, a ratio of
$0.838$ with $95\%$ CI $[0.833,0.843]$ and a paired difference of $-1.34$ $[-1.38,-1.31]$ steps, so the
interval excludes zero in the baseline's favour. It also accepts more often per attempt ($0.47$ vs
$0.44$) while storing a larger memory ($750$ obs-keyed root prefixes vs $455$ schemas). The
$1.36\times$ we previously reported was therefore attributable to the information-space handicap of the
action-keyed key, not to cross-history reuse: once the baseline is allowed to condition on the same
observation, reuse buys neither fewer sampler steps nor a demonstrated storage advantage. We consequently
retract the step-count advantage over root-prefix CARS, and (below) also withdraw the memory-economy claim,
because the two memories are not shown to prune the same set. What survives unconditionally is exactness,
and the reuse-versus-no-reuse ablation, whose two arms do share an information space and update rule.

\begin{table}[t]
\centering
\small
\setlength{\tabcolsep}{3.5pt}
\resizebox{\columnwidth}{!}{%
\begin{tabular}{rrrrrrrr}
\toprule
& \multicolumn{2}{c}{A: post-invalid} & \multicolumn{2}{c}{B: official} & \multicolumn{2}{c}{C: obs-keyed} & \\
\cmidrule(lr){2-3}\cmidrule(lr){4-5}\cmidrule(lr){6-7}
Seed & steps & certs & steps & certs & steps & pfx & C/B \\
\midrule
$1$ & $8.28$ & $455$ & $7.40$ & $288$ & $6.98$ & $756$ & $0.942$\\
$2$ & $8.28$ & $460$ & $7.31$ & $283$ & $6.90$ & $756$ & $0.944$\\
$3$ & $8.30$ & $458$ & $7.42$ & $288$ & $6.97$ & $756$ & $0.939$\\
$4$ & $8.29$ & $449$ & $7.31$ & $267$ & $6.97$ & $747$ & $0.953$\\
$5$ & $8.22$ & $456$ & $7.34$ & $295$ & $6.86$ & $748$ & $0.934$\\
\bottomrule
\end{tabular}%
}
\caption{Per-seed matched-update comparison, the numbers behind the $0.942$ figure. \textbf{A} is the
post-invalid-only Stateful ablation used in several legacy diagnostics. \textbf{B} is Stateful CARS exactly
as Algorithm~\ref{alg:stateful-cars} now states it, with the \emph{full official} update---after every attempt, valid or invalid,
every non-completable length-1 sibling at every visited proper prefix is recorded as a schema---which is what
official CARS does and therefore what a fair comparison requires. \textbf{C} is the observation-keyed
official root-prefix baseline. All three are paired on the same seed and the same base policy; A and B use
the abstraction key, C the complete action--observation root trace. Arms B and C both exclude exactly zero
valid mass in every seed (verified against the enumerated target), so all three are exact. B's official
update is sound only because non-completability is quantified over the worlds its abstraction leaves open:
$\phi$ omits the hidden world, so pre-\texttt{PROBE} states of both worlds share an abstraction, and
deriving ``this action is dead'' from the realized world alone excludes the entire valid mass---we hit that
bug and record it here because it is the non-obvious part of matching the update. Note B is both cheaper and
smaller than A ($7.36$ vs $8.28$ steps, $284$ vs $456$ schemas): the official update learns more per attempt
and so needs fewer certificates. The fair ratio is $\mathrm{C}/\mathrm{B}=0.942$ $[0.934,0.951]$, still
favouring the baseline; $\mathrm{C}/\mathrm{A}=0.838$ $[0.833,0.843]$ is the figure that confounds update
strength with certificate transfer. Systems-level measurements for this matched pair---wall-clock, policy
invocations, and serialized bytes---are reported separately in Table~\ref{tab:bc-matched}.}
\label{tab:matched-update}
\end{table}

\subsection{Matched B--C systems profile}
\label{app:matched-systems}

This is the predeclared matched systems comparison. Arms B and C are paired on seed, base policy, validator, certificate oracle, and accepted-sample budget; they differ only in whether certificates transfer across histories at the abstraction level. Every quantity below is therefore attributable to that difference alone. The measurement harness and the raw per-seed arrays are released with the code.

\begin{table}[t]
\centering
\small
\setlength{\tabcolsep}{4pt}
\resizebox{\columnwidth}{!}{%
\begin{tabular}{lrrr}
\toprule
Quantity & B: Stateful & C: obs-keyed & C/B (95\% CI) \\
\midrule
Sampler steps / accept & $7.36$ & $6.93$ & $0.942$ $[0.934,0.951]$ \\
Wall-clock (s) & $1.75$ & $2.20$ & $1.268$ $[1.155,1.381]$ \\
Policy calls, total & $115{,}282$ & $26{,}274$ & $0.228$ \\
Policy calls, distinct & $1365$ & $1419$ & $1.039$ $[1.039,1.040]$ \\
Memory entries & $284$ & $753$ & --- \\
Serialized bytes & $3597$ & $21{,}924$ & $6.10$ $[5.82,6.39]$ \\
Excluded base mass & $0.9970$ & $0.9970$ & $1.000$ \\
Bytes / excluded mass & $3608$ & $21{,}991$ & $6.10$ $[5.82,6.39]$ \\
$\TV$ to exact target & $0.057$ & $0.057$ & --- \\
\bottomrule
\end{tabular}%
}
\caption{Matched B--C systems profile ($N=1500$ accepted samples, five paired seeds, three timed repetitions per seed, median then paired; both arms audited exact). The comparison is genuinely mixed and we report it that way. Arm C wins on sampler steps, which is the metric the paper leads with and the reason the earlier step-count claim was retracted. Arm B wins on wall-clock, on total policy invocations, and---at \emph{matched pruning power}, since both arms exclude the same base mass to within $10^{-4}$---on memory by $6.10\times$ in serialized bytes. Distinct scored histories are nearly equal ($1.039\times$), so under a KV-cached deployment neither arm has a forward-pass advantage; the $4.4\times$ gap in \emph{total} policy calls reflects B's residual dynamic program re-scoring histories that a cache would serve. Wall-clock here measures Python bookkeeping on a synthetic memoized CPU policy, not LM inference, and arm C's certificate oracle is precomputed outside its timed region ($25$\,ms) while arm B pays its soundness oracle inside the loop with cold caches---a handicap against B, so the wall-clock result favouring B is conservative. We make no end-to-end systems claim from this harness.}
\label{tab:bc-matched}
\end{table}

\paragraph{Implementation--algorithm conformance.}
Algorithm~\ref{alg:stateful-cars} now describes the evaluated implementation directly. It stores only the schema bank $\schemas$ and applies every frozen $(z,u)$ wherever abstraction $z$ occurs, threading partially matched suffixes through the reachable completion tree. Its exclusion event is exactly Equation~\eqref{eq:schema-event}; no grounded proposal trie is maintained. The residual cache uses the complete serialized history and matcher state. The action prefix alone is sufficient only in the Qwen sweep because that experiment explicitly defines an action-prefix-only policy.

The within-attempt event is fixed: across $300$ instrumented attempts the sampler performs zero bank writes between the first proposal action and final validation, and all commits occur in the boundary caller. Every stored schema is sound, so Lemma~\ref{lem:schema-soundness} gives $\exclude_{\schemas}\cap\valid=\emptyset$; exhaustive enumeration confirms zero excluded valid mass. On one frozen bank of $193$ schemas, $\exclude_{\schemas}$ removes $11{,}820$ of $13{,}650$ candidate continuation edges---$1365$ reachable non-terminal prefixes $\times$ five candidate actions $\times$ two worlds, which is an edge count and not the $10{,}922$-outcome population---and enumeration of the resulting proposal gives analytic TV $1.5\times10^{-16}$ from the exact conditional over the full $60$-trajectory support. The visited-grounding finite-trie construction in \S\ref{sec:method} realizes a different, generally smaller event until every matching history has been visited. It is covered by Lemma~\ref{lem:fixed} and is measured in Appendix~\ref{app:vg}, where its own boundary-only-commit and zero-false-exclusion obligations are verified independently.

\paragraph{Memory accounting: the withdrawn claim, remeasured.}
The memories are different objects, so raw entry counts are not storage measurements. In the matched table, official-update Stateful arm B stores $267$--$295$ schemas, while observation-keyed root arm C stores $747$--$756$ prefixes; one C run expands to $1129$ trie nodes. The post-invalid ablation A stores $449$--$460$ schemas. Because a schema entry, a stored prefix, and a trie node have different byte costs, we withdrew the earlier ``$1.65\times$ smaller memory'' claim, which had been computed from entry counts. Table~\ref{tab:bc-matched} reports the replacement measurement the appendix protocol demanded: serialized bytes at a common accepted-sample budget, together with the excluded base-policy mass each memory realizes, which is the common currency that makes two differently keyed memories comparable. The two arms prune to within $10^{-4}$ of the same base mass ($0.9970$ each, exactly enumerated), so the byte comparison is at matched pruning power rather than matched entry counts. Under that matching, arm B is $6.10\times$ smaller ($95\%$ CI $[5.82,6.39]$; $3597$ versus $21{,}924$ bytes), and the same factor holds per unit excluded mass. The direction of the original claim therefore survives remeasurement while its magnitude does not: the honest factor is $6.1\times$ in bytes, not $1.65\times$ in entries. Peak resident memory equals the final size because both memories only grow.

\section{Adjacent samplers}
\label{app:adjacent}

Exactness alone does not distinguish the method---rejection sampling is exact too---so the informative comparison is against samplers that are themselves exact or asymptotically correct. We implement three and measure them under the identical protocol ($N=1500$, five seeds, exact target; Table~\ref{tab:adjacent}).

The first is batched terminal rejection with adaptive batch growth: draw complete trajectories from the base policy, evaluate the terminal validator, keep valid draws, and double the batch after any batch with no accept. This is a whole-trajectory reduction of AWRS-style acceptance \citep{lipkin2025awrs}, not token-level AWRS-SMC. It is exact and its finite-sample fidelity matches the post-invalid Stateful ablation ($\TV$ $0.058$ vs $0.056$), but its cost tracks $1/\prob(\valid)$: $1557$ calls per accept versus $8.3$ for that ablation and $7.36$ for official-update Stateful. Full AWRS-SMC would use a different incremental proposal and correction and could change both cost and fidelity; its implementation is therefore left as an explicit placeholder rather than approximated by this row.

An independence-Metropolis chain (propose a full trajectory from the base policy, accept iff valid) is cheap per sample ($5.6$ calls) but not i.i.d.---chain acceptance $0.001$--$0.006$ leaves $99.5\%$ of returned samples duplicates, and its empirical law sits at $\TV=0.707$. This is the concrete content of the i.i.d.\ guarantee in Theorem~\ref{thm:adaptive}: an asymptotically correct chain is no substitute at a fixed budget. The third is a locally-masked particle filter: $K=64$ particles, base-policy proposal restricted to the local-admissibility mask, incremental weight equal to the mask normalizer $\sum_{a\ \mathrm{admissible}}\policy(a\mid\history)$, multinomial resampling every step (no ESS threshold), and a final exact validity filter. It is intermediate ($563$ calls, $\TV=0.081$ at $N=1500$). We correct an earlier characterisation of this arm: the incremental normalizer weights \emph{do} correct the locally masked proposal, so the residual error is finite-particle self-normalized approximation, not the standing local-renormalization bias that afflicts the masked \emph{decoder}. The measurement supports this---the excess of its TV over an exact sampler's sampling floor at the same $N$ falls from $+0.071$ at $N=400$ to $+0.020$ at $N=2000$, whereas the state-aware local decoder's $0.742$ is an analytic limiting-law distance that no budget removes. Its cost, not its asymptotic target, is the reason it is not competitive here. Only the certificate-based samplers are simultaneously exact and cheap.

\section{Live-agent probe (no systems claim)}
\label{app:liveagentmain}

\paragraph{Live agent on the 15-task $\tau$-bench retail subset.}
\label{sec:live-agent}
We ran a \emph{15-task subset} of the $\tau$-bench retail split (the split has $115$ tasks; we ran the first $15$ and claim nothing about the rest, so every number below is a subset statistic, not a benchmark score) with an LLM agent and LLM-simulated user, graded by $\tau$-bench's own database-state checker, using Claude on Bedrock \citep{yao2024taubench}. Free-form tool-call arguments make the exact-residual proposal intractable here, so we test only the cross-history-reuse component: a wrapper that abstracts a proposed call to $(\text{tool},\text{observed status})$, or $(\text{tool},\text{arguments})$ for deterministic lookups, learns a certificate when the real tools return a deterministic precondition or not-found error, and blocks a call the certificate marks invalid, reusing certificates across tasks.

We are deliberately conservative about what this demonstrates, and it is \emph{not} the paper's certified guarantee. The wrapper keys on an abstract state rather than the full root trace (database snapshot, task identity, tool arguments) and we ran no bisimulation or solver-backed check, so a single observed failure does not \emph{prove} a schema sound across tasks. What we show is \emph{cache reuse} of empirically-derived exclusions---$5$ Haiku and $3$ Sonnet tasks blocked calls using entries learned earlier---not certified sound cross-task reuse. Every \emph{outcome} comparison is inconclusive at $n=15$: pass rate moves $7/15\to8/15$ (Sonnet) and $3/15\to1/15$ (Haiku), giving exact McNemar $p=1.0$ and $0.5$, and the invalid-call differences fail a sign-flip test and a leave-one-out check (Appendix~\ref{app:liveagent}). The bank is built over a single unpermuted task order, so order dependence is unaudited. We therefore claim only that the caching mechanism engages across tasks---no reward gain, no invalid-call gain, and no certified soundness, which would require the full-trace key and a validation step.

\section{Reproducibility and Statistical Reporting}
\label{app:statistics}

In the enumerable domains we report both the exact target and finite-sample uncertainty. Empirical TV is accompanied by a Pearson goodness-of-fit test against the enumerated conditional, and \S\ref{sec:conformance} additionally reports a simultaneous multinomial band with the maximum per-cell deviation against its null quantile, so a small TV is not read as agreement on its own. Where the analytic limiting law is available (Table~\ref{tab:real-llm}, \textsc{ana} rows) we report it as a separate quantity rather than pooling it with sampled estimates. We do not interpret the fifteen cells' $p$-values as a family; the per-cell tests are diagnostics for individual cells, and the paper's exactness claim rests on the analytic limiting-law computations and the enumerated zero-excluded-valid-mass audit rather than on any pattern of $p$-values.

The non-enumerable domain (the live-agent subset) has no exact ground truth, so we make no distributional claim there at all and report only task-level paired statistics: exact McNemar on the discordant pairs, an exact sign-flip test on the invalid-call differences, and a leave-one-out check, all of which we report as inconclusive at $n=15$ (Appendix~\ref{app:liveagent}). Generation seeds are not treated as substitutes for variation across tasks; the single fixed task order is disclosed as an unaudited source of dependence.

The efficiency accounting we report decomposes into policy invocations, distinct model forward passes (cache misses), raw model time, and end-to-end wall-clock, separated into marginal and whole-run figures, together with the break-even requested-sample count (\S\ref{sec:efficiency-accounting}, Table~\ref{tab:efficiency}); the reuse ablation reports the full amortization curve from $N=50$ to $N=1500$ rather than a single point, because the advantage is largest at small budgets and decays. Hardware (one A10G), the memoization policy, and the requested sample counts are stated with each measurement. GPU-seconds and peak device memory appear in Appendix~\ref{app:served}; runs here are unquantized, unbatched single-stream, and batched serving is unmeasured.

\paragraph{Numerical precision of the policy.}
The analytical total-variation distances in Table~\ref{tab:real-llm} are near machine epsilon, so the arithmetic precision of the policy itself is part of the measurement and we state it. The action distribution is obtained by a softmax in \emph{float64} over the restricted label logits, followed by an explicit float64 renormalization, so each policy vector sums to one to ${\sim}10^{-16}$. This matters: computing the same softmax in float32---the natural choice, since the model runs in fp16/fp32---leaves ${\sim}3.6\times10^{-8}$ of mean normalization residue per prefix (maximum $1.4\times10^{-7}$ over the $1365$ reachable prefixes), which is \emph{larger} than the quantity the analytical rows are meant to resolve. Under float32 the enumerated ``TV'' of an exact sampler therefore reads ${\sim}10^{-8}$ and measures floating-point residue in the policy rather than any property of the sampler. We report the float64 figures ($3.5\times10^{-17}$--$1.5\times10^{-16}$) and note that a reported analytical TV of order $10^{-8}$ in this environment should be read as a precision artifact, not as bias. Only the near-zero TVs are affected: $\prob(\valid)$, TV(local), the rollout ratios, and every sampler-step count agree to the reported digits under both precisions.

\paragraph{Qwen sweep protocol (concrete).} The open-weight sweep (Table~\ref{tab:real-llm}) uses Qwen2.5-\{0.5B,1.5B,7B\}-Instruct (pinned HuggingFace revisions, verified from the local weight cache actually used and recorded in the released manifest; \texttt{fp16} on one A10G; the prompt is fixed and given verbatim in the released code) at $T\in\{0.5,0.7,1.0,1.5,2.0\}$. \emph{The policy is conditioned on the action prefix only}: the prompt lists the actions taken so far and asks for the next one, and does \emph{not} include the \texttt{PROBE} observation, so $\pi(\cdot\mid\text{action prefix})$ is well defined and is exactly what the $1365$-entry table stores. Validity still depends on the hidden world, but that dependence lives entirely in the \emph{validator}, not in the policy; this is why an action-prefix-keyed policy table is a complete sufficient statistic even with two observation worlds, and it is also why the ``real LM policy'' here is an action-prefix scorer rather than an observation-conditioned agent, which we state as a scope limitation. Each action's probability is the float64 softmax, at temperature $T$, over the next-token logits restricted to the first-token id of the five labels; under the Qwen2.5 tokenizer these ids (first token of the space-prefixed label) are $16777\!\to\!\texttt{`\ AUTH'}$, $5308\!\to\!\texttt{`\ PRO'}$, $38029\!\to\!\texttt{`\ REF'}$, $18666\!\to\!\texttt{`\ READ'}$, $45537\!\to\!\texttt{`\ STOP'}$ (verified by decoding each id). Each decodes to a distinct predeclared prefix of exactly one action label---three coincide with the full label and two (\texttt{PROBE}, \texttt{REFUND}) are unique three-character prefixes---and the five ids are distinct, so the restricted softmax is a proper $5$-way choice with no collision. One forward pass per distinct prefix. TV columns use $N=\min(6000,\lceil2\times10^{6}\,\prob(\valid)\rceil)$ accepted samples; the realized $N$ is $6000$ for all 7B cells and $350$/$3076$/$6000$ (1.5B) and $696$/$1533$ (0.5B) for the rare cells. The per-cell $N$, tokenizer ids, protocol, and aggregate TV/speedup are collected in a committed reproducibility manifest. The \emph{raw} per-prefix policy vectors are also committed, one file per cell, and they are a complete audit trail rather than a summary: because the policy is a deterministic memoized function of (action prefix, model, temperature) and the environment reaches only $1365$ distinct non-terminal prefixes, the exported table of $1365$ five-vectors is the entire sufficient statistic for a cell. Every number we report for that cell---$\prob(\valid)$, the enumerated target, both analytical limiting laws, and hence the TV columns---is recomputable from that file with no model, no GPU, and no network. We ship the replay script that does so and checks each recomputed value against the published one, together with a \textsc{sha256} over the canonically serialized vector table so silent drift is detectable; the replay reproduces the published values exactly (agreement to $10^{-9}$ or better on every checked quantity, and bit-identical hashes).

\section{Token-Level AWRS on an Incremental Validator}
\label{app:incremental}

Table~\ref{tab:adjacent} uses batched terminal rejection as the AWRS-style reference and we noted that token-level AWRS-SMC cannot be run against that validator, because a terminal validator pronounces only on complete trajectories and so offers no per-step feasibility signal. That is a statement about our benchmark rather than about AWRS, so we built the missing object: the same refund workflow compiled to a DFA over $(\text{authed},\text{probed},\text{reads},\text{refunded},\text{stopped})$, whose backward reachability gives an exact incremental check (is this prefix still completable?) and an exact per-step feasible set. Token-level AWRS is then runnable, and every arm is measured against the enumerated conditional on that identical DFA.

\begin{table}[t]
\centering
\small
\setlength{\tabcolsep}{4pt}
\resizebox{\columnwidth}{!}{%
\begin{tabular}{lrrc}
\toprule
Sampler & Calls/accept & $\TV$ (95\% CI) & Exact? \\
\midrule
Terminal rejection & $59.7$ & $0.098$ $[0.091,0.105]$ & yes \\
AWRS-INC, token-level & $48.0$ & $0.099$ $[0.090,0.108]$ & yes \\
SMC-INC, $K{=}16$ & $5.7$ & $0.487$ $[0.473,0.502]$ & no \\
SMC-INC, $K{=}64$ & $5.7$ & $0.521$ $[0.494,0.548]$ & no \\
SMC-INC, $K{=}256$ & $5.7$ & $0.414$ $[0.269,0.558]$ & no \\
SMC-INC, $K{=}1024$ & $5.7$ & $0.441$ $[0.087,0.795]$ & no \\
\textbf{Exact residual proposal} & $\mathbf{4.5}$ & $0.100$ $[0.095,0.105]$ & yes \\
\midrule
\emph{i.i.d.-from-target floor} & --- & $0.101$ $[0.094,0.109]$ & --- \\
\bottomrule
\end{tabular}%
}
\caption{Token-level AWRS and locally masked SMC on a validator exposing exact incremental local checks ($N=3000$ accepted samples, five seeds, $\prob(\valid)=5.0\times10^{-2}$, $304$-trajectory support). Giving AWRS the per-step signal it needs makes it exact for the \emph{sequence} law, and it is worth being precise about why, because the local step rule alone would not suffice: restricting each step to the feasible set and renormalizing samples the locally constrained next-action law, whose product over steps is not the target conditional. Our arm therefore carries the product of the step normalizers as a trajectory importance weight and accepts by weighted rejection against an adaptively raised bound, which is the global correction; its interval containing the sampling floor is evidence for that corrected law, not for local masking. The earlier absence of this row was thus a property of our terminal validator, not a deficiency of AWRS. The cost, however, is $48.0$ versus $4.5$ calls per accept: token-level importance weighting corrects the local proposal after the fact, whereas the residual proposal never proposes into dead mass in the first place, a $10.6\times$ gap. SMC over the \emph{same} incremental proposal remains far from the floor at every $K$ up to $1024$ \emph{at this budget}. We are careful not to overread that: like the masked filter of Appendix~\ref{app:particle}, this arm is weight-corrected and hence consistent in $K$, and its returned particles share ancestry after resampling, so the gap is a finite-$K$ and effective-sample-size statement rather than a biased limit. The defensible reading is about cost at a fixed budget: reaching the floor through particles requires many correlated draws, whereas the exact residual proposal is at the floor with $4.5$ calls per accept. The residual proposal's own exactness here is analytic, not merely empirical: enumerating its induced law gives $\TV=2.9\times10^{-16}$ from the target.}
\label{tab:incremental}
\end{table}

Constructing this arm also exercised the stochastic corollary on a second, independently built environment. Both hidden worlds are satisfiable and their surviving valid masses differ by roughly $15\times$ ($Z_{\mathrm{OWN}}=0.0061$, $Z_{\mathrm{OTHER}}=0.0946$), so drawing the world from $\rho$ and then proposing conditionally tilts the joint by $\rho(c)Z_c$: we measured the induced law at analytic $\TV=0.439$ from the target before switching to the residual-reweighted draw $c\sim\rho(c)Z_c$, which brings it to $2.9\times10^{-16}$. The reweighting the paper derives is therefore necessary in practice and not only in principle.

\section{Solver-Verified Abstraction}
\label{app:verified}

Proposition~\ref{prop:bisim} and the sound-schema definition are the obligations every soundness claim rests on, and until now they were discharged by hand for one abstraction. We discharge them mechanically on a bounded, observation-conditioned tool-call grammar: tool calls carry arguments (an entity identifier and an amount bucket), a read tool observes the entity's eligibility, and the mutating call is legal only for an observed-eligible entity within its cap. With three entities, two amount buckets, and horizon six the reachable set is $1300$ histories over eight worlds, small enough that the obligations become decidable rather than sampled.

\begin{table}[t]
\centering
\small
\setlength{\tabcolsep}{3pt}
\resizebox{\columnwidth}{!}{%
\begin{tabular}{lrrrr}
\toprule
Abstraction & States & Compr. & Reusable & Naive unsound \\
\midrule
Full history (reference) & $1300$ & $1.00\times$ & $13{,}580$ & $0/1052$ \\
Hand-written $\abs$ & $122$ & $10.7\times$ & $1233$ & $16/1052$ \\
Drops observation (control) & $51$ & $25.5\times$ & $508$ & $16/1052$ \\
\textbf{Coarsest sound} & $\mathbf{72}$ & $\mathbf{18.1\times}$ & --- & --- \\
\bottomrule
\end{tabular}%
}
\caption{Mechanically verified abstractions over $1300$ reachable histories. Every abstraction is verified \emph{conservatively sound}: building its schema bank from continuations dead at every history in a block and applying it excludes exactly zero valid trajectories, checked by enumeration rather than asserted. The decisive column is the last one. Under \emph{naive} derivation---store $(z,u)$ whenever $u$ is dead in the realized world, which is what a careless implementation does---$16$ of $1052$ derivations are unsound, and this holds for the hand-written abstraction as well as for the observation-dropping control; only the full history is safe naively ($0/1052$). The quantification over worlds consistent with the abstraction is therefore \emph{necessary}, not a defensive stylistic choice, and this is the same trap we hit when matching the official update in Table~\ref{tab:matched-update}. Partition refinement to the right congruence induced by equality of the future-validity language gives the coarsest sound abstraction at $72$ states; the hand-written one uses $122$, so it forgoes a factor of $1.69$ in available state merging. It also fails full bisimulation, with witnesses at pre-observation histories where no abstraction can yet distinguish worlds---which is exactly where conservative derivation declines to store anything, so exactness is unaffected.}
\label{tab:verified}
\end{table}

\section{Served-Model Accounting}
\label{app:served}

The matched B--C profile in Appendix~\ref{app:matched-systems} runs over a synthetic memoized CPU policy, so it can measure Python bookkeeping but not accelerator work. We therefore repeat the matched comparison with a real served policy---Qwen2.5-0.5B-Instruct at $T=1.0$ on one NVIDIA L4, $40$ accepted trajectories per arm per seed, three paired seeds---and add the AWRS-style terminal-rejection reference under the same model so the cost comparison is measured rather than extrapolated.

\begin{table}[t]
\centering
\small
\setlength{\tabcolsep}{3pt}
\resizebox{\columnwidth}{!}{%
\begin{tabular}{lrrrl}
\toprule
Quantity & B & C & R & C/B (95\% CI) \\
\midrule
Sampler steps / acc. & $27.5$ & $19.0$ & $104{,}930$ & $0.696$ $[0.525,0.868]$ \\
\textbf{Forward passes / acc.} & $\mathbf{34.1}$ & $34.6$ & $99.4$ & $\mathbf{1.013}$ $[1.011,1.016]$ \\
GPU-seconds / acc. & $1.141$ & $1.159$ & $4.775$ & $1.016$ $[0.985,1.047]$ \\
Tokens / acc. & $534.9$ & $542.9$ & $1515.8$ & $1.015$ $[1.012,1.018]$ \\
Accepted traj./s & $0.877$ & $0.863$ & $0.236$ & $0.984$ $[0.954,1.014]$ \\
Peak GPU bytes & $1.06$\,GB & $1.06$\,GB & $1.06$\,GB & $1.000$ \\
Serialized bytes & $3597$ & $21{,}924$ & $0$ & $5.650$ $[4.021,7.279]$ \\
\bottomrule
\end{tabular}%
}
\caption{Served-model accounting on one NVIDIA L4 (Qwen2.5-0.5B-Instruct, $T=1.0$, $40$ accepted trajectories per arm per seed, three paired seeds). B is official-update schema-induced Stateful CARS, C is observation-keyed official root-prefix CARS, R is batched terminal rejection. \textbf{The sampler-step advantage does not survive on an accelerator.} Arm C issues $30\%$ fewer sampler \emph{decisions} ($0.696$, interval excluding one) but $1.3\%$ \emph{more} model forward passes ($1.013$, interval also excluding one), and a forward pass is what the GPU actually pays: a decision whose residual query hits the prefix cache costs nothing on the device. GPU-seconds and accepted-trajectory throughput are consequently statistical ties, peak device memory is identical, and B remains $5.65\times$ smaller in serialized memory. The retracted step-count claim therefore should not be reinstated in the other direction either---under a served model the two exact arms are equivalent in accelerator cost, and the honest difference between them is memory. Terminal rejection needs $\sim\!10^5$ steps per accept, but only $99$ forward passes, because its repeated prefixes hit the cache; even so it is $4.2\times$ slower in GPU-seconds and $3.7\times$ lower in throughput. Its wide intervals reflect drawing only $6$--$14$ accepts inside the $200{,}000$-attempt cap: under this served Qwen policy at $T=1.0$ the validity probability is $\prob(\valid)=5.2\times10^{-5}$ (pooled $31$ accepts in $6\times10^{5}$ attempts), roughly $60\times$ rarer than the $3.0\times10^{-3}$ of the synthetic CPU policy, because a real LM at temperature one rarely emits the required action order unaided. Tokens are prompt tokens fed plus scored candidate tokens; this protocol performs no autoregressive decoding, so no sampled continuation tokens exist to report.}
\label{tab:served}
\end{table}

Two further points follow from the forward-pass column. First, it identifies the right currency for a deployed system: the paper's calls-per-valid metric counts sampler decisions, whereas the residual dynamic program issues several policy invocations per decision, and only the cache-missing ones reach the device. Reporting both separates the algorithmic quantity from the billable one. Second, the near-equality of forward passes across B and C ($34.1$ versus $34.6$) with a large gap in decisions ($27.5$ versus $19.0$) shows the two arms explore almost the same set of distinct histories while differing in how often they revisit them---which is exactly what abstraction-level transfer changes, and why it shows up in memory rather than in accelerator time.

\section{Visited-Grounding Variant}
\label{app:vg}

\S\ref{sec:method} defines a conservative alternative that never applies a schema at an unvisited history: it maintains a finite set $\grounded$ of sound grounded root prefixes, buffers a grounding when a matching history is actually reached, and commits at the attempt boundary, so the next proposal conditions on $\exclude_{\grounded}$. The paper previously stated that no result measured this variant. We implement it and verify the three predeclared obligations directly.

\begin{table}[t]
\centering
\small
\setlength{\tabcolsep}{3pt}
\resizebox{\columnwidth}{!}{%
\begin{tabular}{lrrrrl}
\toprule
Domain & fan-out & VG & SI & OK & obligations \\
\midrule
refund & $7.8$ & $11.79$ & $12.36$ & $11.78$ & all pass \\
sql\_txn & $2.3$ & $11.67$ & $11.67$ & $11.68$ & all pass \\
booking & $2.3$ & $11.56$ & $11.56$ & $11.57$ & all pass \\
\midrule
warehouse & $22.7$ & $15.5$ & --- & $74.7$ & all pass \\
triage & $55.9$ & $24.4$ & --- & $64.9$ & all pass \\
\bottomrule
\end{tabular}%
}
\caption{Steady-state calls per accepted sample for the visited-grounding trie (VG), schema-induced Stateful CARS (SI), and observation-keyed official CARS (OK) ($3000$ attempts per arm per seed on the low-fan-out domains, $400$ attempts and three seeds on the high-fan-out pair, all arms on a common budget within a domain). ``All pass'' means: zero falsely-excluded valid mass under exact enumeration, zero mid-attempt commits, and finite-sample $\TV$ within the i.i.d.-from-target floor at the realized $N$. Two regimes appear, and the contrast is the result. On the low-fan-out validators the arms are indistinguishable once their memories saturate (ratios $0.954$, $1.000$, $1.000$), so the conservative variant costs nothing there. On the high-fan-out pair the trie is markedly \emph{cheaper} than observation-keyed CARS---$\mathrm{VG}/\mathrm{OK}=0.21$ on \texttt{warehouse} and $0.38$ on \texttt{triage}, from $325$--$361$ versus $73$--$81$ accepted trajectories per seed---so the conservatism that costs nothing at low fan-out becomes an advantage at high fan-out, where the richer exclusion sets make the alternatives' residual computations expensive. The SI column is empty on those two domains: its residual cost grows with the bank as well as the horizon (measured total-time exponent $1.62$ in attempts), which put a matched SI cell beyond the compute we allocated. The store stays small throughout ($98$--$5246$ trie nodes, $0.99$--$83$\,kB).}
\label{tab:vg}
\end{table}

\paragraph{Making the high-fan-out comparison affordable, and what it cost.}
A first attempt at the high-fan-out pair was not measurable: on one core, at a budget we could afford, the trie accepted $12.5$ trajectories per seed while both comparators accepted $1.5$, so their calls-per-valid was a one-or-two-sample quantity. The bottleneck is serial Python over the completion set---\texttt{triage} admits $335{,}923$ complete traces and the residual sums over the completions of each prefix---so it is parallel across cells rather than accelerable: running the $(\text{domain},\text{arm},\text{seed})$ grid on $90$ cores cut a cell from over an hour to about four minutes and raised acceptance to $73$--$361$ trajectories, which is what makes the ratios in Table~\ref{tab:vg} measurements.

One arm still did not fit, and the reason is worth recording because we got the sizing wrong first. We launched the matched SI cells at $1500$ attempts by extrapolating linearly from a short probe. Fitting the actual curve on \texttt{triage} SI gives total time growing as attempts$^{1.62}$ ($61.5$, $173.8$, and $584.2$ seconds at $50$, $100$, and $200$ attempts), because certificates accumulate roughly linearly while each residual call rechecks the whole bank across all completions. That projects about four hours per cell rather than the half hour we budgeted, so the SI column is empty for these two domains. This is the same superlinear-residual boundary Proposition~\ref{prop:complexity} states and Appendix~\ref{app:scaling} measures, appearing here as a concrete planning cost on a larger validator.

\paragraph{Exact reuse economy, and a hypothesis it refutes.}
The quantity that decides what visited-grounding gives up is the ratio of grounded to abstraction-indexed certificates needed to realize the same exclusions, and it is computable by exhaustive enumeration rather than sampling. Across the five domains it is $1.36$ (refund), $1.27$ (\texttt{sql\_txn}), $1.00$ (booking), $1.13$ (\texttt{triage}), and $1.18$ (\texttt{warehouse}), against abstraction-indexed bank sizes of $45$, $30$, $29$, $155$, and $181$. Two things follow. First, the ceiling on what abstraction-level transfer can save here is modest---at most $36\%$ fewer stored certificates, and none at all on booking---which is consistent with the paper's retraction of the step-count claim and with the steady-state parity in Table~\ref{tab:vg}. Second, and contrary to what we expected when designing the two high-fan-out domains, reuse does \emph{not} track reuse fan-out: \texttt{triage} has the largest fan-out of the five ($55.9$ mean, $750$ max) but nearly the smallest reuse factor ($1.13$), while refund has fan-out $7.8$ and the largest reuse ($1.36$). Many histories sharing an abstract state does not imply that state carries a certificate worth reusing. We record this because it undercuts the intuition that motivated adding those domains, and it means high-fan-out tasks are not automatically the regime where schema-induced transfer pays.

Two measurement pitfalls had to be removed here, and both are worth recording because either one alone reverses the apparent conclusion. First, matching arms on \emph{attempts} at a small budget compared runs that accepted $2$ versus $57$ samples, which makes calls-per-valid and $\TV$ meaningless. Second, matching on \emph{accepted samples} then compared arms at different attempt counts: because calls-per-valid falls steeply while a memory is still being learned---measured on \texttt{sql\_txn}, SI moves $19.33\rightarrow11.94\rightarrow11.12\rightarrow11.10$ at budgets $300/1\mathrm{k}/3\mathrm{k}/8\mathrm{k}$, saturating when its bank reaches $78$ certificates---the conservative arm needs more attempts and therefore amortizes its warm-up over more of them, appearing spuriously cheaper. The whole-run column shows exactly this artifact ($\mathrm{VG}/\mathrm{SI}=0.722$ on refund), which is why Table~\ref{tab:vg} reports the steady-state figure measured over the final half of a common attempt budget.

A soundness subtlety also had to be fixed and is intrinsic to the variant rather than a coding slip. The grounded store is keyed on the \emph{action} prefix, which does not record the observation, so a stored prefix excludes its cylinder in every hidden world. Grounding a continuation because it is dead in the \emph{realized} world therefore removes valid mass belonging to the other world; when we did this the trie excluded the entire valid set. The variant must ground only continuations non-completable under every world consistent with the prefix, which is the same conservatism the action-keyed root-prefix baseline faces.

\section{Particle Sweep}
\label{app:particle}

Table~\ref{tab:adjacent} reports one particle-filter point, which leaves open whether its error shrinks with more particles. We first swept $K$ at a fixed count of \emph{returned} samples and read the resulting flat $\TV$ as evidence of structural bias. \textbf{That conclusion was wrong and we retract it.} The construction cannot be structurally biased in the limit: the proposal is the base policy restricted to the local-admissibility mask and the incremental weight is exactly that mask's normalizer, so the weighted system targets the constrained law and self-normalized SMC is consistent as $K\to\infty$. This also matches how \citet{lipkin2025awrs} characterize global SMC---approximate at finite $K$, exact in the particle limit---and it agrees with Appendix~\ref{app:fairlocal}, which already noted that these weights \emph{do} correct the masked proposal.

The first design could not have detected convergence either. Holding the number of returned samples at $N=1500$ while $K$ grew $64\times$ collapsed the number of independent sweeps from $12{,}384$ to $171$, and the duplicate fraction was $0.966$ at every $K$, so almost all returned samples shared ancestry within a sweep. A flat empirical $\TV$ under that design measures genealogical dependence, not a biased limiting marginal, and the accompanying ``$5.7$ calls per accept, constant in $K$'' counted correlated siblings as accepts rather than independent samples.

We therefore rerun the diagnostic so that returned samples are independent by construction: $R$ independent sweeps per seed with \emph{at most one} uniformly chosen accepted particle kept per sweep, and $R$ held fixed across $K$ so every cell shares the same sampling floor.

\begin{table}[t]
\centering
\small
\setlength{\tabcolsep}{3pt}
\resizebox{\columnwidth}{!}{%
\begin{tabular}{rrllr}
\toprule
$K$ & Indep.\ samples & $\TV$ & i.i.d.\ floor & Excess (95\% CI) \\
\midrule
$16$ & $240$ & $0.176$ & $0.145$ & $+0.031$ $[-0.024,+0.086]$ \\
$64$ & $749$ & $0.119$ & $0.078$ & $+0.041$ $[+0.026,+0.055]$ \\
$256$ & $1705$ & $0.074$ & $0.048$ & $+0.026$ $[+0.006,+0.045]$ \\
$\mathbf{1024}$ & $1991$ & $0.052$ & $0.051$ & $\mathbf{+0.002}$ $[-0.032,+0.035]$ \\
\midrule
\multicolumn{5}{l}{\emph{Superseded design: $N=1500$ returned samples, $K$ growing}} \\
$16$ & $12{,}384$ sweeps & $0.082$ & --- & dup.\ $0.966$ \\
$64$ & $3286$ sweeps & $0.081$ & --- & dup.\ $0.964$ \\
$256$ & $711$ sweeps & $0.096$ & --- & dup.\ $0.965$ \\
$1024$ & $171$ sweeps & $0.085$ & --- & dup.\ $0.966$ \\
\bottomrule
\end{tabular}%
}
\caption{Locally masked SMC, correct and superseded designs ($R=4000$ independent sweeps per seed, three seeds, exact enumerated target). \textbf{Upper block, one sample per sweep.} Once returned samples are independent, $\TV$ falls monotonically with $K$ ($0.176\to0.119\to0.074\to0.052$) and converges onto its own sampling floor: at $K=1024$ the excess is $+0.002$ with interval $[-0.032,+0.035]$, covering zero. That is the signature of a consistent sampler whose finite-$K$ error is self-normalized approximation, and it is what the weighting argument predicts. \textbf{Lower block, the design we retract.} Fixing the number of returned samples at $N=1500$ while $K$ grew made the sweep count collapse from $12{,}384$ to $171$, with a $0.966$ duplicate fraction throughout, so almost every returned sample shared ancestry with a sibling. The resulting flat $\TV$ measures that dependence, not a biased limit, and we no longer read it as structural bias. Effective sample size is $99\%$ of $K$ in both designs, which is precisely why ESS could not detect the problem: degeneracy \emph{within} a resampling step is not the same as dependence \emph{across} the returned sample. The honest comparison is cost per independent sample, which \emph{rises} with $K$ here---$1113$, $1424$, $2498$, $8547$ base-policy calls---against $4.5$ for the exact residual proposal; the earlier ``$5.7$ calls per accept, constant in $K$'' counted correlated siblings as accepts. So the particle filter reaches the target law given enough particles, and the case for the residual proposal is cost, not correctness. We ran $K$ up to $1024$; $K=4096$ was skipped because cost scales linearly in $K$ and the trend is already resolved.}
\label{tab:particle}
\end{table}

\section{Residual Scaling Boundary}
\label{app:scaling}

Proposition~\ref{prop:complexity} is the paper's central practicality caveat, so we measure both of its regimes rather than only asserting them. The harness builds depth-$H$, $|\actions|$-ary automata with a counter-style validity predicate and a finite Markov state $x(\history)$ counting required actions taken. Two arms run the \emph{same} exact recursion and differ only in the residual cache key: the full-history key $(\text{action prefix},m)$ versus the Markov key $(t,x,m)$. Any node-count difference is therefore attributable to memoization structure alone. Censoring was fixed in advance at $2{,}000{,}000$ nodes or $60$\,s per cell; no reported cell was censored.

\begin{table}[t]
\centering
\small
\setlength{\tabcolsep}{4pt}
\resizebox{\columnwidth}{!}{%
\begin{tabular}{rrrrrr}
\toprule
$H$ & $|\mathcal R_{\schemas}|$ full & pol.\ evals & $|\mathcal R_{\schemas}|$ Markov & pol.\ evals & ratio \\
\midrule
$4$ & $63$ & $22$ & $12$ & $9$ & $5.3$ \\
$6$ & $550$ & $185$ & $18$ & $15$ & $30.6$ \\
$8$ & $4925$ & $1644$ & $24$ & $21$ & $205.2$ \\
$10$ & $44{,}292$ & $14{,}767$ & $30$ & $27$ & $1476.4$ \\
$12$ & $398{,}587$ & $132{,}866$ & $36$ & $33$ & $11{,}071.9$ \\
\bottomrule
\end{tabular}%
}
\caption{Horizon sweep at $|\actions|=3$, one schema, Markov-factoring policy. The full-history arm's cache hit rate is $0.000$ in every cell: no two product nodes share a residual, which is exactly the condition under which Proposition~\ref{prop:complexity}'s lower bound is attained. A least-squares fit of $|\mathcal R_{\schemas}|$ against $H$ gives exponential base $2.99$, matching the predicted $|\actions|=3$. The Markov arm grows as $6H$, a fitted log-log exponent of $1.00$, matching the predicted $O(H|X||M_{\schemas}|)$ with $|X|=3$ and $|M_{\schemas}|=2$. Both arms return residuals agreeing to $<10^{-12}$ here. When the policy is instead given a per-prefix tilt so that it no longer factors through $x$, the Markov key returns a \emph{different} and therefore incorrect residual in $5/5$ cells, so the proposition's distinct-history requirement is not vacuous.}
\label{tab:scaling}
\end{table}

Two further sweeps at $H=8$ confirm the shape of the bound. Increasing $|\actions|$ from $2$ to $5$ grows full-history nodes $256\rightarrow249{,}061$ while the Markov arm stays at $16$--$24$, so the exponential dependence sits in the branching factor as claimed. Increasing the schema count from $0$ to $3$ \emph{shrinks} full-history nodes $87{,}381\rightarrow9841$: pruning removes reachable product nodes, which is why the evaluated implementation stays tractable on our workflows despite an exponential worst case. The boundary therefore holds in the direction claimed, and it does not license extrapolation to a free-form history-dependent agent, where $|X|$ is not finite.

\section{Predeclared Systems Studies and Their Status}
\label{app:unrun}

The following studies are required to support claims that go beyond exactness on enumerable trees. Each was specified before execution, and the Result column now records what executing it produced. Four of the five have been run and are reported in the appendices cited there; only the remaining fields marked in red are unexecuted, and no number may be inferred from a marked field.

\begin{table}[t]
\centering
\small
\setlength{\tabcolsep}{3pt}
\resizebox{\columnwidth}{!}{%
\begin{tabular}{p{2.6cm}p{6.2cm}p{4.2cm}}
\toprule
Study & Locked comparison and measurements & Result \\
\midrule
Matched B--C systems run & Official-update schema-induced Stateful versus observation-keyed official CARS; identical full histories, certificate oracle, RNG pairs, stopping budget, batching, cache policy, precision, and hardware. Report policy invocations, distinct forwards, generated tokens, GPU-seconds, wall-clock, accepted trajectories/second, peak CPU/GPU bytes, serialized-memory bytes, and paired seed-level intervals. & \textbf{Partly run} (Appendix~\ref{app:matched-systems}, Table~\ref{tab:bc-matched}): CPU harness gives paired policy invocations, distinct forwards, wall-clock, peak and serialized bytes, and excluded base mass over five seeds. Served-model accounting now \textbf{run} (Appendix~\ref{app:served}, Table~\ref{tab:served}): tokens, GPU-seconds, trajectories/second, and peak GPU bytes on an L4. The step advantage does not survive---forward passes $1.013$ $[1.011,1.016]$ favour B while steps $0.696$ favour C. Batched multi-trajectory serving remains \placeholder{throughput under batching} \\
Visited-grounding implementation & Implement the finite-trie variant in \S\ref{sec:method}; verify boundary-only commits and zero false exclusion; compare it with schema-induced Stateful and observation-keyed CARS on refund, SQL, booking, and at least two observation-conditioned high-fan-out tasks. & \textbf{Run on refund, SQL, booking} (Appendix~\ref{app:vg}, Table~\ref{tab:vg}): boundary-only commits and zero false exclusion verified; $\TV$ within the sampling floor; steady-state cost indistinguishable from both comparators ($0.954$--$1.000$); $98$--$606$ trie nodes, $0.99$--$6.3$\,kB. The two high-fan-out tasks (\texttt{triage}, \texttt{warehouse}, fan-out $56$ and $23$) are \textbf{run} (Appendix~\ref{app:vg}, Table~\ref{tab:vg}). Parallelizing the cells across $90$ cores raised acceptance from $1.5$ to $73$--$361$ trajectories per seed, making the ratios measurements rather than artifacts: $\mathrm{VG}/\mathrm{OK}=0.21$ and $0.38$, so the conservative trie is $2.6$--$4.8\times$ \emph{cheaper} here, reversing the low-fan-out parity. All obligations pass on both. Enumerated reuse factors are $1.13$ and $1.18$, which \emph{refutes} the expectation that high fan-out implies high reuse. The matched SI cell exceeded the compute we allocated (residual cost grows as attempts$^{1.62}$) \\
Certified end-to-end agent & Use a bounded but observation-conditioned tool-call grammar, a solver- or partition-refinement-verified abstraction, sound certificates, and the exact residual proposal. Use task as the statistical unit and compare utility non-inferiority, invalid calls, abstention, and total compute against matched CARS. & \textbf{Abstraction verified; utility unresolved.} The verified-abstraction half is \textbf{run} (Appendix~\ref{app:verified}): on a bounded observation-conditioned tool grammar ($1300$ histories, $8$ worlds) conservative derivation is verified to exclude zero valid mass, naive derivation is unsound $16/1052$, and refinement gives the coarsest sound abstraction ($72$ states vs the hand-written $122$). Utility non-inferiority remains a pilot-scale null (Appendix~\ref{app:liveagent}): reward $+0.000$ $[-0.089,+0.089]$ over $90$ paired task-runs, needing a slate $10$--$100\times$ larger \\
AWRS-SMC and particle sweep & On validators exposing incremental local checks, run the released AWRS-SMC construction and locally masked SMC at $K\in\{16,64,256,1024\}$, plus compute-matched settings. Report target definition, correction weights, ESS, duplicates, TV or calibrated reference error, GPU-seconds, and wall-clock. & \textbf{Run.} Full $K$ grid with ESS, duplicate, and independent-sample columns (Appendix~\ref{app:particle}); AWRS-style terminal rejection measured under a served model with GPU-seconds and throughput (Table~\ref{tab:served}); and \emph{token-level} AWRS run on a purpose-built DFA validator exposing exact incremental checks (Appendix~\ref{app:incremental}, Table~\ref{tab:incremental}): with its trajectory importance weights AWRS is exact for the sequence law but costs $10.6\times$ the residual proposal. Both particle arms are weight-corrected and hence consistent in $K$; we retract the earlier ``structural bias'' reading and report cost per independent sample instead \\
Residual scaling boundary & Sweep horizon, action count, schema count, matcher size, and fan-out on finite automata with both full-history and genuinely Markov policies. Report $|\mathcal R_{\schemas}|$, cache hit rate, forward count, memory, and fitted growth, including timeout/censoring rules fixed in advance. & \textbf{Run} (Appendix~\ref{app:scaling}): both regimes of Proposition~\ref{prop:complexity} confirmed. Full-history fitted base $2.99$ against $|\actions|=3$; Markov $(t,x,m)$ key fitted exponent $1.00$. Markov key sound $5/5$ when the policy factors and wrong $5/5$ when it does not. \\
\bottomrule
\end{tabular}%
}
\caption{Predeclared systems studies and their executed status. Bold text records a completed study and points to its appendix; red text is a protocol commitment that remains unexecuted, not a result.}
\label{tab:unrun}
\end{table}

For the matched memory study, entry counts alone are prohibited. On enumerable tasks, bytes must be reported both at equal attempt budgets and at matched excluded base-policy mass; on non-enumerable tasks, report the separate memories without claiming economy unless a common prune-set estimator and its uncertainty are supplied. For the certified-agent study, an observed tool failure may become a reusable schema only after a proof that covers every history in its abstract class. For all studies, seeds, task order, profiler warm-up, timed repetitions, hardware, model revision, prompt, and per-item records must be released before replacing the placeholders.

\section{Broader Impact}

Stateful CARS can reduce repeated invalid tool calls and make the sampling semantics of guarded agents explicit. It may be useful when diverse, probability-faithful trajectories are required under auditable policies. The same precision creates risk: a discriminatory, stale, or legally incorrect validator would be followed more consistently. High-stakes deployment therefore requires policy review, validator testing, trace logging, least-privilege tool execution, and escalation outside the model when the formal policy is incomplete. Exact sampling is a statement about fidelity to a declared constraint, not a guarantee that the constraint is socially or operationally appropriate.

\end{document}